\documentclass[a4paper,fleqn]{cas-sc}

\usepackage[numbers,sort&compress]{natbib}

\def\tsc#1{\csdef{#1}{\textsc{\lowercase{#1}}\xspace}}
\tsc{WGM}
\tsc{QE}
\usepackage{amsmath,amssymb}      
\usepackage{xcolor}       
\usepackage{enumitem}     
\usepackage[most]{tcolorbox}  
\usepackage[table]{xcolor}
\usepackage{booktabs}
\usepackage{multirow}
\usepackage{makecell}
\usepackage{graphicx}
\usepackage{tabularx}
\usepackage{array}
\usepackage{algorithm}
\usepackage{algorithmicx}
\usepackage{algpseudocode}

\usepackage{mdframed}

\newmdenv[
    linewidth=0.7pt,
    linecolor=black!35,
    backgroundcolor=black!1,
    roundcorner=2pt,
    skipabove=3pt,
    skipbelow=3pt,
    innerleftmargin=7pt,
    innerrightmargin=7pt,
    innertopmargin=5pt,
    innerbottommargin=5pt
]{casebox}

\newcommand{\casetitle}[1]{%
    \par\vspace{3pt}
    \noindent\textbf{#1}\par
    \vspace{1.5pt}
    \hrule
    \vspace{3pt}
}
\newtheorem{theorem}{Theorem}

\newdefinition{rmk}{Remark}
\newproof{pf}{Proof}
\newproof{pot}{Proof of Theorem \ref{thm}}
\newtheorem{proposition}[theorem]{Proposition}

\begin{document}
\let\WriteBookmarks\relax

\makeatletter
\renewcommand\subsection{\@startsection{subsection}{2}{\z@}%
    {10pt \@plus 3\p@ \@minus 2\p@}%
    {4pt}%
    {\ssectionfont\raggedright}}
\makeatother

\renewcommand{\topfraction}{0.98}
\renewcommand{\textfraction}{0.01}
\renewcommand{\floatpagefraction}{0.8}
\setcounter{topnumber}{4}
\setcounter{totalnumber}{5}

\shorttitle{Hexagonal Semantic Reasoning for LLMs}  

\shortauthors{Author et al.}

\title [mode = title]{Semiotic logical hexagon theory for LLM logical reasoning}

\tnotemark[1] 


\tnotetext[1]{This paper is a substantially revised and extended version of a paper which first appeared at the 2026 Annual Meeting of the Association for Computational Linguistics (ACL 2026 Oral), under the title \emph{Semantic-Aware Logical Reasoning via a Semiotic Framework}.}

\shortauthors{Zhang et al.}  

%

\author[1]{Yunyao Zhang}[orcid=0009-0001-3412-9262]
\fnmark[1]
\ead{ikostar@hust.edu.cn}
\credit{Writing -- review \& editing, Writing -- original draft, Methodology, Conceptualization}

\author[1]{Xinglang Zhang}
\fnmark[1]
\ead{normanspark@hust.edu.cn}
\credit{Investigation, Validation, Writing -- review \& editing}

\author[1]{Zeliang Chen}
\credit{Writing -- review \& editing}

\author[1]{Junqing Yu}
\credit{Writing -- review \& editing}

\author[1]{Zikai Song}[orcid=0009-0006-6651-2027]
\cormark[1]
\ead{skyesong@hust.edu.cn}
\credit{Writing -- review \& editing, Supervision}

\affiliation[1]{organization={Huazhong University of Science and Technology},
            addressline={},
            city={Wuhan},
            postcode={430074},
            state={Hubei},
            country={China}}

\cortext[1]{Corresponding author}

\fntext[1]{Yunyao Zhang and Xinglang Zhang contributed equally to this work.}




\begin{abstract}
Large language models (LLMs) have become powerful tools for language understanding and logical reasoning. However, they still make mistakes when a problem requires both understanding meaning and following logic. A key reason is that natural-language statements often carry implicit semantic relations before any formal reasoning begins. If these hidden meanings are not properly organized, the model may reach incorrect conclusions even when the subsequent reasoning process appears logically valid. Existing methods improve reasoning through decomposition, symbolic translation, external solvers, or self-verification, but pay comparatively less attention to the semantic structure on which reasoning depends.
In this paper, we further investigate how semantic organization influences logical reasoning in LLMs. To this end, we propose \textbf{HexLogicAgent}, a framework that first organizes the meaning of natural-language statements and then guides logical reasoning through structured verification.
In our investigation, we also make two observations. First, incomplete semantic representations, rather than deductive inference itself, are a major source of logical reasoning failures in LLMs. Second, explicitly modeling the complete structure of semantic opposition substantially delays the degradation of reasoning performance as logical complexity increases.
Experiments on challenging logical reasoning benchmarks demonstrate that HexLogicAgent consistently improves reasoning reliability across multiple LLMs. The core idea is supported by a \textbf{logical hexagon theory}, which explains why a complete structure of opposing meanings is necessary for reliable reasoning.

\end{abstract}




\begin{keywords} 
Large language models \sep Logical reasoning \sep Neuro-symbolic reasoning \sep Logical hexagon theory
\end{keywords}

\maketitle

\section{Introduction}

The rapid development of Large Language Models (LLMs), including DeepSeek~\citep{DeepseekR1-2025deepseek}, ChatGPT~\cite{Gpt-4o-2024gpt}, and Qwen~\citep{qwen2.5-2025qwen2}, has brought substantial progress to natural-language understanding and reasoning. These models can help users read complex texts~\citep{TnT-LLM-KDD2024}, connect scattered information~\citep{LLMsConnecting-NIPS2024}, generate explanations~\citep{LLMExplanations-ACL2024}, and solve problems across a wide range of domains~\citep{LLMDomain-NEURIPS2023}. By doing so, they improve the efficiency of knowledge work and support tasks that require careful thinking, such as commonsense reasoning~\citep{Commonsense-reasoning-2024-candle}, mathematical problem solving~\citep{Mathematical-reasoning-2023mathcoder,Geometric-reasoning-2024deep}, philosophical thinking~\citep{philosophical-thinking-2013critical}, and social analysis~\citep{zhang2026couplingmacrodynamicsmicro,zhang2026intervensiminterventionawaresocialnetwork,wang2026seeingwiderjointspatiotemporal,GAS3-2025}. Moreover, LLMs are increasingly used as reasoning assistants in domains where conclusions must be drawn from natural-language statements, contextual premises, and incomplete evidence.

\begin{figure}[width=\textwidth,pos=h!]
    \centering
    \includegraphics[
        width=0.95\textwidth,
        height=0.4\textheight,
        keepaspectratio
    ]{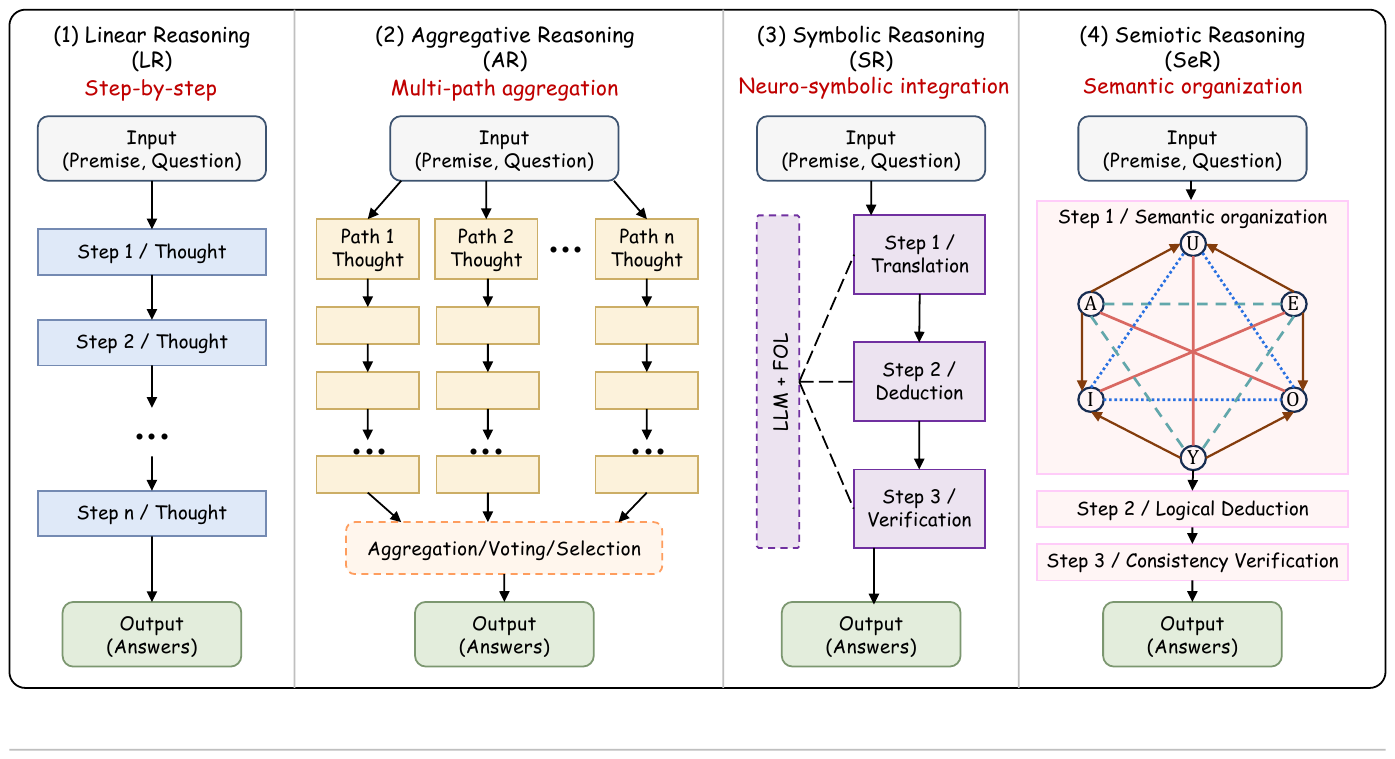}
    \caption{
    \textbf{Comparison of representative LLM reasoning paradigms.}
    \textbf{(1) Linear Reasoning (LR)} follows a single reasoning chain,
    \textbf{(2) Aggregative Reasoning (AR)} aggregates multiple paths, and
    \textbf{(3) Symbolic Reasoning (SR)} couples LLMs with symbolic modules.
    \textbf{(4) Semiotic Reasoning (SeR)}, instantiated by \textbf{HexLogicAgent},
    organizes proposition meanings through semiotic structures before deduction and verification.
    }
    \label{fig:teaser}
\end{figure}

While LLMs offer significant benefits for reasoning, they also raise important reliability concerns~\citep{Reliable-reasoning-ASC2026}. A reasoning problem in natural language is not only a chain of logical steps~\citep{Linguistic-complexity1998linguistic}, but also a set of statements whose meanings must be properly understood before deduction begins~\citep{Ambiguity-1993ambiguity}. This challenge comes from two coupled sources: (1) \textit{semantic complexity}, where statements may be abstract, context-dependent, only partly specified, or organized by different forms of opposition; and (2) \textit{logical complexity}, where conclusions must be derived from contextual premises through valid reasoning steps. When these meanings are not handled carefully, an LLM may treat different kinds of opposition as the same kind of conflict, or may turn a conditional statement into an unconditional rule. As a result, the final answer may look logically well formed while still being semantically wrong~\citep{LLMFormalStable-USTC-WWW2026}. Therefore, our research aims to improve LLM reasoning by organizing the meaning of propositions before applying logical deduction.

Therefore, connecting semantic interpretation with logical deduction is crucial for reliable LLM reasoning. This requires a structured account of how propositions relate to one another before deduction begins. Without such semantic organization, later reasoning steps may be built on an incomplete or mistaken interpretation of the original problem.

This study focuses on a typical LLM reasoning setting, where a model receives natural-language premises and a question, and produces a conclusion with or without external reasoning tools~\citep{formal-logic-2003introduction}. Existing methods can be grouped into three categories:
(1) \textbf{Linear Reasoning (LR)} methods, such as Naive Prompting and Chain-of-Thought (CoT) prompting~\citep{COT-2022chain}, which elicit step-by-step reasoning;
(2) \textbf{Aggregative Reasoning (AR)} approaches~\citep{CLOVER-ICLR-2024divide, Cumulative-reasoning-2023-TsinghuIIIS, TOT-2023tree, Determlr-ACL-2024-RenDaGaoling, multi-step-2024exploring}, which combine multiple reasoning paths; and
(3) \textbf{Symbolic Reasoning (SR)} frameworks~\citep{NL2FOL-2023harnessing, Baseline-Aristotle2024xu, Symbolic-COT-2024faithful, hahn2022formal, LogicLM-2023}, which couple LLMs with formal translation, symbolic deduction, or verification.
Although effective, these methods mainly improve the reasoning process after propositions have been interpreted. They give limited attention to how proposition meanings should be organized before deduction, often assuming clean predicates, stable meanings, and unambiguous contexts. As a result, abstraction, conflicting stances, existential commitments, and partial compatibility remain underrepresented.

This gap motivates our central question: is there a complete semantic structure that can organize proposition meanings before deduction and support more reliable LLM reasoning? To answer this question, we build on the study of semantic opposition in semiotics and logic. Greimas' semiotic square~\citep{Greimas1982semiotics, Greimas-meaning1987} offers a classical starting point by organizing meaning through structured oppositions. Our previous LogicAgent framework~\citep{zhang2026semanticawarelogicalreasoningsemiotic} follows this idea and uses the semiotic square to guide multi-perspective reasoning. However, its four-part structure still leaves some proposition relations underrepresented. In this work, our answer is given in Proposition~\ref{prop:hexagonal-completion}, which shows that the logical hexagon~\citep{hexagon-2012,logical-hexagon-2012} provides a more complete structure for semantic-aware logical reasoning~\citep{square2hexagon-2012}.

Our main contributions are as follows:
\begin{itemize}
\item We generalize LogicAgent from the semiotic square to the logical hexagon, enabling complete semantic organization before logical deduction. This extension improves semantic interpretation from $13\%$ to $100\%$ on representative semantic understanding cases and translates into consistent gains of $+2.66$, $+2.74$, and $+2.41$ average accuracy points across three frontier LLM backbones.

\item We formalize the logical hexagon in the context of semantic-aware LLM reasoning through formal definitions of semantic relations, semantic completeness, and reasoning reliability. Our theoretical and empirical analyses further reveal that incomplete semantic representations, rather than deductive inference itself, are a major source of logical reasoning failures in LLMs, while explicitly modeling complete semantic opposition substantially delays reasoning degradation under increasing logical complexity.

\end{itemize}

This work is an extended version of our ACL 2026 main conference paper, selected for an oral presentation~\citep{zhang2026semanticawarelogicalreasoningsemiotic}. Compared with the conference version, this article provides a formalization of the logical hexagon, formal analysis, full implementation details, and additional experiments. The rest of the paper is organized as follows. Section~\ref{sec:preliminaries} introduces the preliminaries and problem formulation. Section~\ref{sec:Method} presents the methodology, including the logical hexagon and HexLogicAgent. Section~\ref{sec:experiments_analysis} presents the experiments and analysis. Section~\ref{sec:related_work} reviews related work. Finally, Section~\ref{sec:conclusion} concludes the paper and outlines future research directions.

\section{Preliminaries}
\label{sec:preliminaries}

We formalize semiotic reasoning in first-order logic (FOL) (§~\ref{sec:fol-prelim}), instantiate Greimas' semiotic square under FOL semantics (§~\ref{sec:greimas-square}), and show that the four-position structure underrepresents existential and compatibility relations (Prop.~\ref{prop:square-limitation}). This motivates the logical hexagon (§~\ref{sec:logical_hexagon}) as a more complete structure for organizing proposition meanings.

\subsection{First-Order Logic}
\label{sec:fol-prelim}

We use FOL to represent the logical form of natural-language (NL) propositions. Let $x$ denote an individual variable, $M(x)$ a subject predicate, and $W(x)$ a property predicate. Universal and existential propositions are written as
\begin{equation}
\forall x \bigl(M(x)\rightarrow W(x)\bigr),
\qquad
\exists x \bigl(M(x)\land W(x)\bigr).
\end{equation}
The first formula states that every object satisfying $M$ also satisfies $W$, while the second states that at least one object satisfies both $M$ and $W$. Since universal implications may become vacuously true when the subject class is empty, we explicitly assume existential import, i.e., $\exists x\,M(x)$, whenever opposition relations are analyzed.

\begin{figure}[width=\textwidth,pos=h!]
    \centering
    \includegraphics[
        width=0.9\textwidth,
        height=0.3\textheight,
        keepaspectratio
    ]{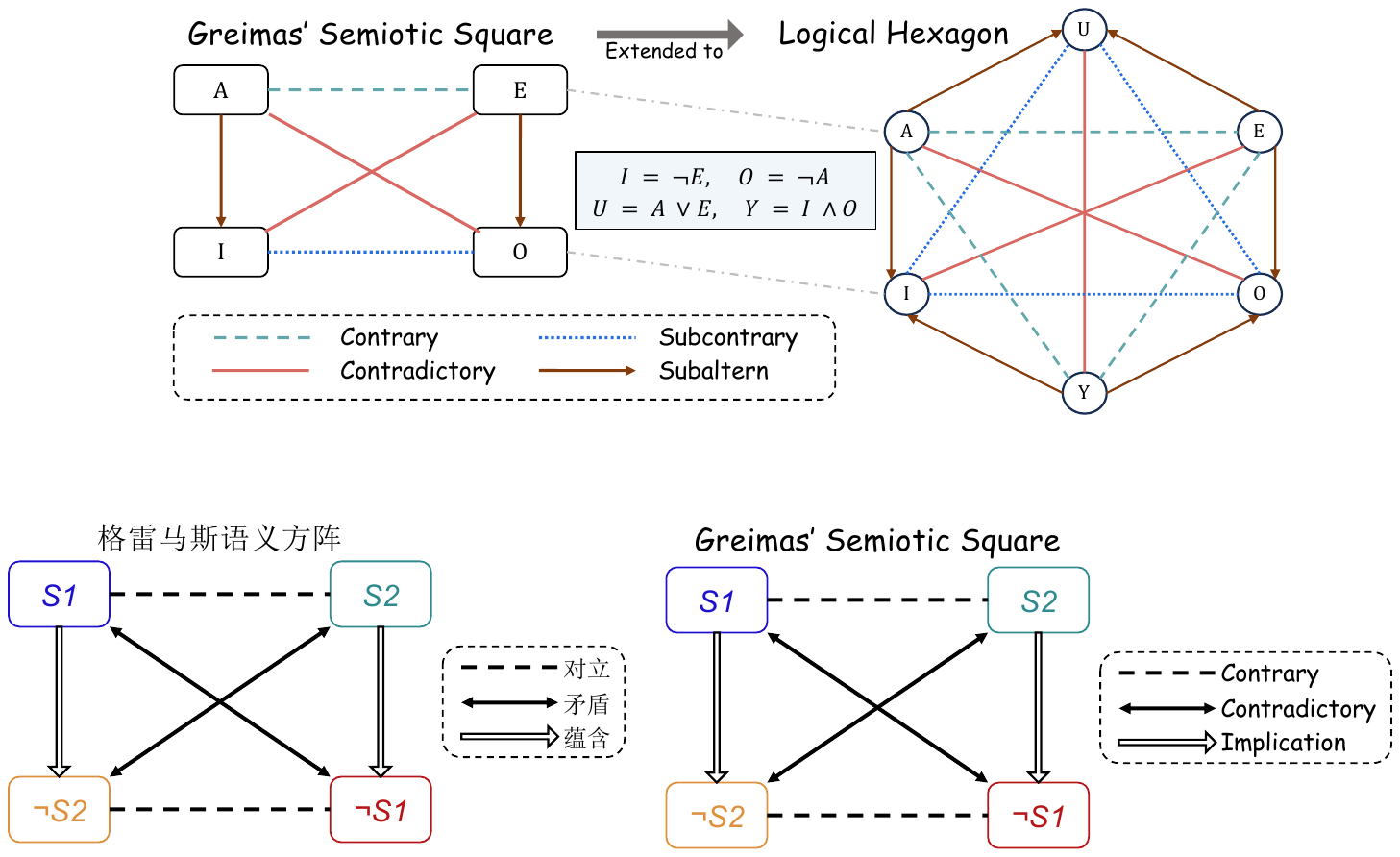}
    \caption{
    \textbf{Comparison between the semiotic square and the logical hexagon.}
    The semiotic square captures contrary, contradictory, and subaltern relations, while the logical hexagon extends it with existential and mixed-state positions. This enables HexLogicAgent to model subcontrary relations, distinguish local counterexamples from global negation, and represent partial proposition satisfaction.
    }
    \label{fig:square-to-hexagon}
\end{figure}

\subsection{Greimas' Semiotic Square and Its Limitation}
\label{sec:greimas-square}

Natural-language reasoning often involves more than binary truth evaluation.
A proposition may have a direct denial, a contrary stance, and weaker or stronger variants.
Greimas' semiotic square~\citep{Greimas1982semiotics,Greimas-meaning1987} provides a classical way to organize such semantic relations.
Starting from a pair of contrary terms, the square introduces the negation of each term and forms a four-position structure.
This structure organizes proposition meanings through three basic relations:
(1) \textit{contrariety}, where two terms cannot both hold but may both fail to hold;
(2) \textit{contradiction}, where one term directly negates the other; and
(3) \textit{implication} or \textit{complementarity}, where one position is semantically connected to another through a dependency relation.
To connect this structure with FOL, consider the following four propositions:
\begin{align}
A &:= \forall x \bigl(M(x) \rightarrow W(x)\bigr), 
& O \equiv \lnot A &:= \exists x \bigl(M(x) \land \lnot W(x)\bigr), \\
E &:= \forall x \bigl(M(x) \rightarrow \lnot W(x)\bigr), 
& I \equiv \lnot E &:= \exists x \bigl(M(x) \land W(x)\bigr).
\end{align}
In this instantiation, $A$ and $E$ serve as the initial contrary terms: 
$A$ states that all $M$-objects satisfy $W$, whereas $E$ states that all $M$-objects satisfy $\lnot W$. 
The other two positions are generated by negating these contrary terms: $O$ is the contradiction of $A$, and $I$ is the contradiction of $E$. 
Contrariety holds between $A$ and $E$ when the subject class is non-empty, i.e., when $\exists x\,M(x)$ holds, since the two universal claims cannot both be true in that case but can both be false. 
The implication or complementarity relation is existentially conditioned: with existential import, $A$ supports $I$, and $E$ supports $O$.

This FOL instantiation shows both the usefulness and the limitation of the four-position square.
It organizes universal contraries and their contradictory negations, but it does not make existential commitment itself an explicit semantic position.
Nor does it represent composite meanings such as the coexistence of $I$ and $O$, which is needed to distinguish universal uniformity from partial compatibility.

\begin{proposition}[Existential Limitation and Incompleteness of Greimas' Square under FOL Semantics]
\label{prop:square-limitation}
Let a \\ square-based structure be instantiated by the four propositions
$A,E,I,O$ defined above. Then:
\begin{enumerate}
    \item the existential propositions $I$ and $O$ do not form a contrary pair under ordinary FOL semantics; they can be jointly true, and they can be jointly false only when the subject class is empty;
    \item the four-position structure is incomplete for semiotic reasoning, because it does not represent the composite meanings $A \lor E$ and $I \land O$ as explicit semantic positions.
\end{enumerate}
\end{proposition}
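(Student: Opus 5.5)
Part (1) reduces to exhibiting FOL structures and making one semantic observation, so I will argue it model-theoretically. For joint truth, I will take a two-element domain $\{a,b\}$ with $M=\{a,b\}$ and $W=\{a\}$. Then $a$ witnesses $I=\exists x(M(x)\land W(x))$ and $b$ witnesses $O=\exists x(M(x)\land\lnot W(x))$. This already shows that $I$ and $O$ are not contrary, since contrariety forbids joint truth.

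For joint falsity, I will argue directly. Suppose $\lnot I\land\lnot O$ holds. Then every $M$-object fails $W$, and every $M$-object also satisfies $W$. Equivalently, $\forall x(M(x)\to\lnot W(x))\land\forall x(M(x)\to W(x))$, which gives $\forall x(M(x)\to(W(x)\land\lnot W(x)))$, that is, $\forall x\,\lnot M(x)$. So joint falsity forces the subject class to be empty. Conversely, any structure with $M=\emptyset$ falsifies both $I$ and $O$, so the case is realizable. Under the existential import $\exists x\,M(x)$ assumed in §\ref{sec:fol-prelim}, $I\lor O$ is therefore valid; this is exactly the subcontrary relation, which differs from contrariety.

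Part (2) is less a theorem than a claim about representability, so I will first make "represents as an explicit semantic position" precise. I will say a structure represents a formula $\varphi$ if $\varphi$ is logically equivalent, under existential import, to one of its positions. The proof then shows that neither $A\lor E$ nor $I\land O$ is equivalent to any of $A,E,I,O$, and for this I will build countermodels over the domain $\{a,b\}$ with $M=\{a,b\}$. With $W=\{a,b\}$, the formula $A\lor E$ is true while $E$, $O$ and $I\land O$ are false. With $W=\emptyset$, $A\lor E$ is true while $A$ and $I$ are false. With $W=\{a\}$, the formula $I\land O$ is true while $A$ and $E$ are false, and $A\lor E$ is false while $I$ and $O$ are true. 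Together these separate $A\lor E$ from each of the four positions and $I\land O$ from $A$ and $E$. To separate $I\land O$ from $I$ and from $O$, I will use $W=\{a,b\}$ (where $I$ is true and $I\land O$ false) and its mirror $W=\emptyset$ (where $O$ is true and $I\land O$ false).

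I will also record that $I\land O\equiv\lnot(A\lor E)$. This explains why these two composites are the natural missing pair: they are the contradictories Y and U that complete the hexagon. It also shows that they are genuinely new Boolean combinations rather than restatements of existing positions.

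The main obstacle is the formalization step in part (2), because "incomplete for semiotic reasoning" has no formal definition earlier in the paper. I will fix the equivalence-based notion of representation given above and state it explicitly, so that the countermodel argument carries the content of the claim.
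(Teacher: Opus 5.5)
Your proposal is correct and follows the paper's proof. Part (1) uses the same two-object model and the same reduction of $\lnot I\land\lnot O$ to $\forall x\,\lnot M(x)$, and part (2) reaches the same conclusion that neither $A\lor E$ nor $I\land O$ is equivalent to any of $A,E,I,O$, with your all-$W$, no-$W$ and mixed models (and your explicit equivalence-based notion of representation) making precise what the paper leaves implicit in its \emph{stronger}/\emph{weaker}/\emph{incompatible} phrasing.

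One small correction to your closing remark: $I\land O\equiv\lnot(A\lor E)$ does not by itself show that these composites are new. It does, however, let you transfer the four separations of $A\lor E$ directly to $I\land O$, since $\{A,E,I,O\}$ is closed under negation.
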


\begin{pf}
\textbf{Claim 1.}
$I$ and $O$ are not contraries because they can be jointly true. 
For instance, in a model with two objects $a,b$ such that 
$M(a)\land W(a)$ and $M(b)\land \lnot W(b)$, both
$I=\exists x(M(x)\land W(x))$ and 
$O=\exists x(M(x)\land \lnot W(x))$ hold. 
Since contraries cannot both be true, $I$ and $O$ do not form a contrary pair.

Their joint falsity arises only from an empty subject class:
\begin{align}
\lnot I \land \lnot O
&\equiv
\forall x(M(x)\rightarrow \lnot W(x))
\land
\forall x(M(x)\rightarrow W(x))  \notag\\
&\equiv
\forall x\,\lnot M(x).
\end{align}
Thus, making $I$ and $O$ both false depends on the absence of $M$-objects, not on a genuine contrary relation between the two existential propositions.

\textbf{Claim 2.}
The composite meanings $I\land O$ and $A\lor E$ are not reducible to any single node in $A,E,I,O$.
The formula
\[
I\land O
\equiv
\exists x(M(x)\land W(x))
\land
\exists y(M(y)\land \lnot W(y))
\]
expresses a mixed subject class. 
It is stronger than either $I$ or $O$ alone and is incompatible with both $A$ and $E$.

Conversely,
\[
A\lor E
\equiv
\forall x(M(x)\rightarrow W(x))
\lor
\forall x(M(x)\rightarrow \lnot W(x))
\]
expresses a uniform subject class. 
It is weaker than either $A$ or $E$ alone, but it excludes mixed cases where both $I$ and $O$ hold. 
Hence neither $I\land O$ nor $A\lor E$ is equivalent to any single proposition among $A,E,I,O$.
Therefore, the four-position square does not encode these composite meanings as explicit semantic positions.
\end{pf}

\subsection{Logical Hexagon Theory}
\label{sec:logical_hexagon}

The logical hexagon extends the four square positions $A,E,I,O$ by adding two composite positions:
\begin{align}
Y &:= I \land O
   \equiv
   \exists x\bigl(M(x)\land W(x)\bigr)
   \land
   \exists y\bigl(M(y)\land \lnot W(y)\bigr), \\
U &:= A \lor E
   \equiv
   \forall x\bigl(M(x)\rightarrow W(x)\bigr)
   \lor
   \forall x\bigl(M(x)\rightarrow \lnot W(x)\bigr).
\end{align}
Here, $Y$ represents a mixed subject class, where both existential alternatives are realized, while $U$ represents a uniform subject class, where one of the two universal alternatives holds.
Under the normalized subject--property tuple $\langle x,M,W\rangle$, the six positions ${A,E,I,O,U,Y}$ form a complete hexagonal closure: $A$ and $E$ capture universal alternatives, $I$ and $O$ capture existential alternatives, and $U,Y$ make the uniform and mixed compositions explicit as first-class semantic positions.

\begin{proposition}[Hexagonal Completion under FOL Semantics]
\label{prop:hexagonal-completion}
Let $A,E,I,O$ be the four propositions defined above, and assume existential import over the subject class, i.e., $\exists x\,M(x)$.
Together with $Y:=I\land O$ and $U:=A\lor E$, the six propositions $\{A,E,I,O,U,Y\}$ form a logical hexagon with the following four relations:
\begin{enumerate}
    \item \textit{Contrariety:} $A,E,Y$ are pairwise contraries;
    \item \textit{Subcontrariety:} $U,I,O$ are pairwise subcontraries;
    \item \textit{Contradiction:} $A$ contradicts $O$, $E$ contradicts $I$, and $U$ contradicts $Y$;
    \item \textit{Subalternation:} $A\Rightarrow I$, $E\Rightarrow O$, $A\Rightarrow U$, $E\Rightarrow U$, $Y\Rightarrow I$, and $Y\Rightarrow O$.
\end{enumerate}
\end{proposition}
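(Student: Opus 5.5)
The plan is to reduce everything to a three-way partition of models. Under existential import $\exists x\,M(x)$, every model falls into exactly one of three mutually exclusive and jointly exhaustive cases: (i) every $M$-object satisfies $W$, (ii) every $M$-object satisfies $\lnot W$, (iii) some $M$-object satisfies $W$ and some satisfies $\lnot W$. The first step is to show that case (i) is exactly $A$, case (ii) is exactly $E$, and case (iii) is exactly $Y$. Cases (i) and (ii) cannot overlap, since a witness of $\exists x\,M(x)$ would have to satisfy both $W$ and $\lnot W$. Case (iii) excludes (i) and (ii) by definition of $Y=I\land O$. Exhaustiveness holds because if neither $A$ nor $E$ holds, then $O$ and $I$ both hold, which is $Y$. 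With this partition, each of the six positions becomes a union of cells: $A=\{(i)\}$, $E=\{(ii)\}$, $Y=\{(iii)\}$, $I=\lnot E=\{(i),(iii)\}$, $O=\lnot A=\{(ii),(iii)\}$, and $U=A\lor E=\{(i),(ii)\}$. The identification $I=\{(i),(iii)\}$ uses existential import: under $A$, the witness $M$-object satisfies $W$.

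Once the table is in place, each clause is checked by comparing sets. For \emph{contrariety}, $A,E,Y$ are pairwise disjoint singletons, and each pair has a nonempty complement, so they cannot both be true but can both be false. For \emph{subcontrariety}, $U,I,O$ are the three two-element subsets, so any two of them cover all three cells and cannot both be false. Any two also share a cell, so they can both be true. For \emph{contradiction}, the pairs $A/O$, $E/I$ and $U/Y$ are complementary. For \emph{subalternation}, each claimed implication is a strict set inclusion: $\{(i)\}\subset\{(i),(iii)\}$, and similarly for the other five.

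To establish that every cell is realizable, I would give explicit small models: a one-object model with $M(a)\land W(a)$, a one-object model with $M(a)\land\lnot W(a)$, and the two-object model from the proof of Proposition~\ref{prop:square-limitation}. This makes the "can both be false/true" and strictness parts concrete.

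The main obstacle is Step 1, because the proposition depends essentially on existential import. Without it, $A$ and $E$ are jointly true in the empty-subject model, and $I$ and $O$ are jointly false there, as Proposition~\ref{prop:square-limitation} shows. So the disjointness of (i) and (ii), and the implications $A\Rightarrow I$ and $E\Rightarrow O$, must explicitly invoke $\exists x\,M(x)$. After that step, everything else is routine set bookkeeping.
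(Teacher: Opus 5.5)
Your argument is correct, and it is organized differently from the paper's.

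The paper proceeds clause by clause with syntactic FOL equivalences:
\begin{enumerate}
\item existential import is invoked only to show that $A$ and $E$ cannot both hold and that $I$ and $O$ cannot both fail;
\item the incompatibility of $Y$ with $A$ and $E$ is read off from $Y\Rightarrow O\equiv\lnot A$ and $Y\Rightarrow I\equiv\lnot E$;
\item everything involving $U$ comes from the single De Morgan computation $\lnot U\equiv\lnot A\land\lnot E\equiv O\land I\equiv Y$.
\end{enumerate}

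You instead put all the logical work into one lemma: the partition of models satisfying $\exists x\,M(x)$ into the three cells $A$, $E$, $Y$. Your exhaustiveness step is essentially the paper's $\lnot U\equiv Y$. After that, the six positions are the three singletons and the three two-cell unions. Contrariety, subcontrariety, contradiction and subalternation then reduce to disjointness, covering, complementarity and inclusion.

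Your route buys two things.
\begin{enumerate}
\item It makes precise the sense of ``completion'': under existential import, the six positions are exactly the nontrivial elements of the eight-element Boolean algebra generated by $A$ and $E$.
\item Your three small models establish the non-degeneracy halves that the paper's proof leaves unstated. Contraries can both be false, subcontraries can both be true, and the six subalternations are strict. The paper's own informal definition of contrariety (cannot both hold, \emph{may both fail}) actually requires the first of these.
\end{enumerate}

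The paper's route is shorter and stays closer to the formulas the agent manipulates. However, it verifies only the exclusion and implication directions.
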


\begin{pf}
\textit{(1) Contrariety.}
The propositions $A$ and $E$ cannot both be true under existential import, since a non-empty subject class cannot be uniformly $W$ and uniformly $\lnot W$ at the same time.
The proposition $A$ is also incompatible with $Y$, because $Y=I\land O$ entails $O$, and $O\equiv\lnot A$.
Similarly, $E$ is incompatible with $Y$, because $Y=I\land O$ entails $I$, and $I\equiv\lnot E$.
Thus, $A,E,Y$ are pairwise contraries.

\textit{(2) Subcontrariety.}
Under existential import and classical bivalence, $I$ and $O$ cannot both be false, because every existing $M$-object must satisfy either $W$ or $\lnot W$.
Moreover, if $U$ is false, then
\[
\lnot U
\equiv
\lnot(A\lor E)
\equiv
\lnot A\land \lnot E
\equiv
O\land I
\equiv
Y.
\]
Hence $U$ cannot be false together with either $I$ or $O$, since $\lnot U$ entails both $I$ and $O$.
Therefore, $U,I,O$ are pairwise subcontraries.

\textit{(3) Contradiction.}
By standard FOL equivalence, $\lnot A\equiv O$ and $\lnot E\equiv I$.
Hence $A$ contradicts $O$, and $E$ contradicts $I$.
From the equivalence above, $\lnot U\equiv Y$, so $U$ contradicts $Y$.

\textit{(4) Subalternation.}
The implications follow directly from the definitions.
Under existential import, $A\Rightarrow I$ and $E\Rightarrow O$.
Since $U=A\lor E$, we have $A\Rightarrow U$ and $E\Rightarrow U$.
Since $Y=I\land O$, we have $Y\Rightarrow I$ and $Y\Rightarrow O$.
\end{pf}

\section{Methodology}
\label{sec:Method}

We extend the original LogicAgent framework into a hexagon-guided reasoning framework, termed \textbf{Hexagon-guided LogicAgent} (\textbf{HexLogicAgent}). 
HexLogicAgent keeps the three-stage design of LogicAgent, but replaces the four-position semiotic square with a six-position logical hexagon.
As shown in Figure~\ref{fig:square-to-hexagon}, HexLogicAgent consists of three stages:
(1) \textit{Hexagonal Semantic Structuring},
(2) \textit{Logical Reasoning}, and
(3) \textit{Hexagon-guided Reflective Verification}.
The first stage organizes a target proposition into six semantically related positions, the second stage performs deduction for each relevant position, and the third stage verifies the resulting judgments according to the opposition relations encoded in the logical hexagon.

\begin{figure}[width=\textwidth,pos=h!]
    \centering
    \includegraphics[
        width=0.95\textwidth,
        height=0.4\textheight,
        keepaspectratio
    ]{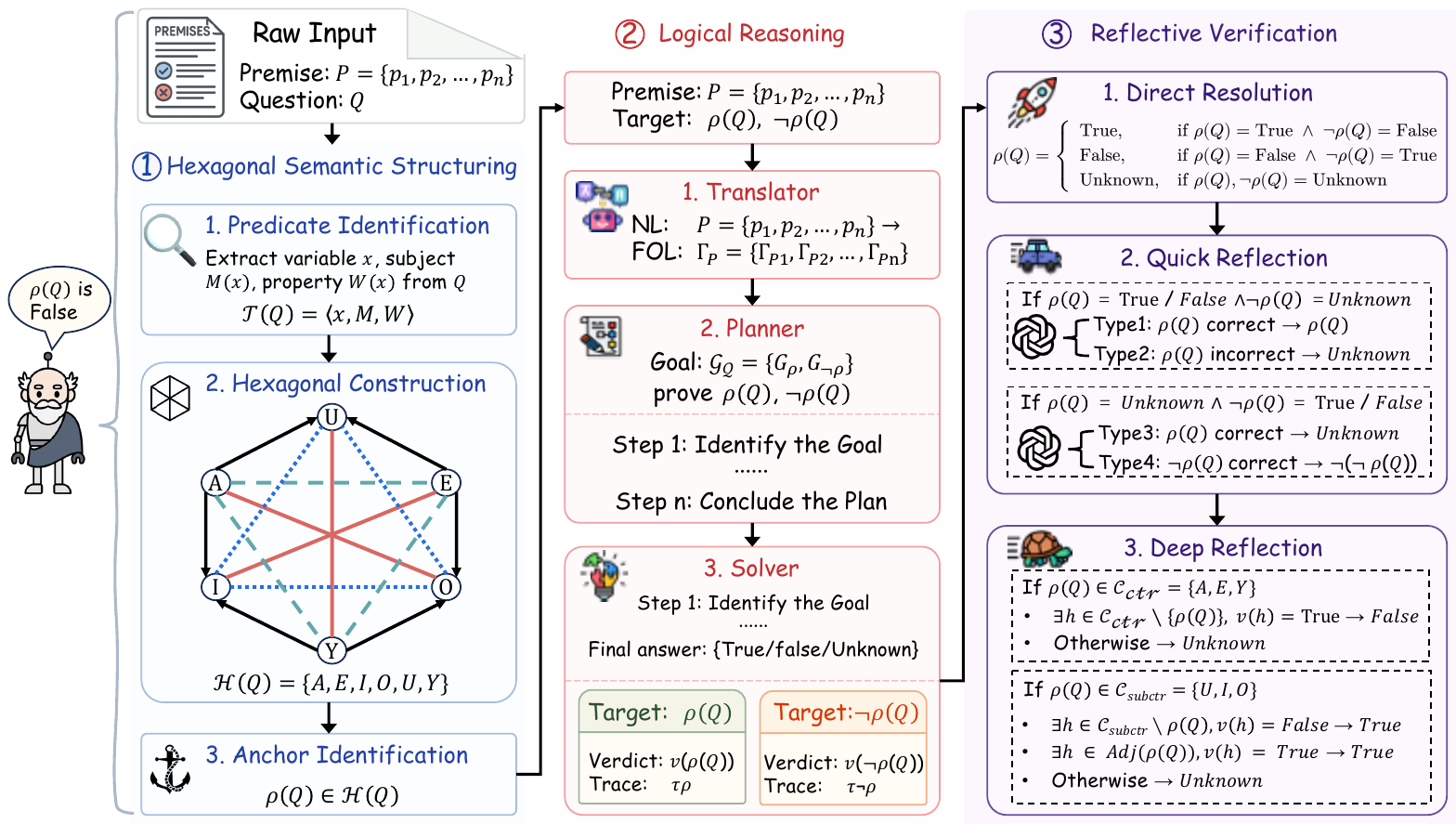}
    \caption{
    \textbf{Overview of HexLogicAgent.}
    The agent processes a natural-language reasoning problem through three stages.
    (1) \textbf{Hexagonal Semantic Structuring} extracts the predicate tuple
    \(\mathcal{T}(Q)=\langle x,M,W\rangle\), constructs the logical hexagon
    \(\mathcal{H}(Q)=\{A,E,I,O,U,Y\}\), and maps the target proposition to its anchor position \(\rho(Q)\).
    (2) \textbf{Logical Reasoning} formalizes the premises into \(\Gamma_P\), plans two proof goals for \(\rho(Q)\) and \(\lnot\rho(Q)\), and derives preliminary verdicts with reasoning traces.
    (3) \textbf{Reflective Verification} applies direct resolution, quick reflection, and deep reflection over the hexagonal relations to produce the final answer in
    \(\{\textbf{True},\textbf{False},\textbf{Unknown}\}\).
    }
    \label{fig:square-to-hexagon}
\end{figure}

\subsection{Task Definition}
\label{sec:task_definition}

Given a set of natural-language premises $P=\{p_1,p_2,\ldots,p_n\}$ and a target proposition $Q$, the task is to determine the truth status of $Q$ with respect to $P$.
The answer space is defined as
$\mathcal{A}=\{\text{True},\text{False},\text{Unknown}\}$.
Here, \textbf{True} indicates that $Q$ can be derived from $P$, \textbf{False} indicates that the contradiction of $Q$ can be derived from $P$, and \textbf{Unknown} indicates that neither $Q$ nor its contradiction can be conclusively derived.
\footnote{ProntoQA is restricted to the classical two-valued setting (\textbf{True}/\textbf{False}), whereas RepublicQA, ProofWriter, FOLIO, and ProverQA additionally include \textbf{Unknown} to handle indeterminate cases.}










\subsection{Hexagonal Semantic Structuring Stage}
\label{sec:hexagonal_semantic_structuring}

Natural-language propositions are often semantically compressed: a single surface statement may leave its quantificational scope, polarity, existential import, and counterexample space implicit.
If deduction starts directly from such an under-specified proposition, the reasoning process may prematurely commit to one semantic interpretation.
To mitigate this issue, Hex first maps the target proposition $Q$ into a logical hexagon before performing deduction.

\noindent\textbf{Predicate Identification.}
Given a target proposition $Q$, the system first identifies its normalized subject--property structure.
We define the basic elements as follows:
\begin{itemize}
\item \textbf{Individual variable $x$.}
The variable $x$ ranges over the relevant domain of discourse and denotes the object currently being evaluated.

\item \textbf{Subject predicate $M(x)$.}
The predicate $M(x)$ specifies the subject class under discussion. 
Depending on the proposition, it may denote entities, concepts, reified actions, or event instances, e.g., $Student(x)$, $Animal(x)$, $Action(x)$, or $Event(x)$.

\item \textbf{Property condition $W(x)$.}
The predicate $W(x)$ specifies the property or condition attributed to objects satisfying $M(x)$. 
It may correspond to states, attributes, normative meanings, evaluative meanings, or relation-based conditions, e.g., $Diligent(x)$, $Harmful(x)$, $Just(x)$, $Good(x)$, or $\exists y(Teacher(y)\land Help(x,y))$.
\end{itemize}

This step yields a normalized predicate tuple:
\begin{equation}
\mathcal{T}(Q)=\langle x,M,W\rangle .
\label{eq:predicate-tuple}
\end{equation}
For example, propositions such as `all $M$ are $W$'' and `some $M$ are $W$'' share the same predicate tuple: $M(x)$ identifies the subject class, while $W(x)$ identifies the asserted property or condition.
They differ only in quantificational force, corresponding to universal commitment and existential realization, respectively.
Thus, predicate identification abstracts away surface variation and provides the basis for constructing the hexagonal semantic space generated by $\langle x,M,W\rangle$.

\noindent\textbf{Hexagonal Position Construction.}
Following Proposition~\ref{prop:hexagonal-completion}, HexLogic instantiates the logical hexagon using the normalized tuple $\mathcal{T}(Q)=\langle x,M,W\rangle$.
The four basic positions are $A,E,I,O$, and the two composite positions are $U=A\lor E$ and $Y=I\land O$.
Together, they form the hexagonal semantic space of the target proposition:
\begin{equation}
\mathcal{H}(Q)=\{A,E,I,O,U,Y\}.
\label{eq:hexagonal-space}
\end{equation}
In this space, $A$ and $E$ capture universal alternatives, $I$ and $O$ capture existential alternatives, and $U,Y$ make uniform and mixed compositions explicit.

\noindent\textbf{Anchor Position Identification.}
Since the target proposition $Q$ may instantiate different logical forms, HexLogicAgent identifies its anchor position within the constructed hexagonal space.
The anchor mapping is defined as $\rho(Q)\in \mathcal{H}(Q)$,
where $\rho(Q)$ denotes the hexagonal position that best matches the normalized meaning of $Q$.
A universal affirmative proposition is mapped to $A$, a universal negative proposition to $E$, an existential affirmative proposition to $I$, and an existential negative proposition to $O$.
If $Q$ expresses a uniform alternative, it is mapped to $U$; if it expresses a mixed condition where both alternatives are realized, it is mapped to $Y$.

\noindent\textbf{Hexagonal Constraint Registration.}
After constructing $\mathcal{H}(Q)$ and identifying $\rho(Q)$, HexLogicAgent does not treat the six positions as independent propositions.
Instead, it registers the hexagonal relations established in Proposition~\ref{prop:hexagonal-completion} as structural constraints for downstream reasoning and verification.
Since the positions are instantiated from the same normalized tuple $\langle x,M,W\rangle$, their contradiction, contrariety, subcontrariety, and subalternation relations are guaranteed by construction under existential import.
Formally, the registered constraint set is defined as
\begin{equation}
\mathcal{R}_{\mathrm{hex}}
=
\{\mathcal{R}{\mathrm{con}},
\mathcal{R}{\mathrm{ctr}},
\mathcal{R}{\mathrm{subctr}},
\mathcal{R}{\mathrm{subalt}}\},
\label{eq}
\end{equation}
where $\mathcal{R}*{\mathrm{con}}$, $\mathcal{R}*{\mathrm{ctr}}$, $\mathcal{R}*{\mathrm{subctr}}$, and $\mathcal{R}*{\mathrm{subalt}}$ correspond to contradiction, contrariety, subcontrariety, and subalternation constraints, respectively.
These constraints are not used to re-prove the hexagon itself; rather, they are used later to check whether the solver's position-level judgments are mutually coherent.

\subsection{Logical Reasoning Stage}
\label{sec:logical_reasoning_stage}

This stage performs logical deduction over the hexagonal semantic space constructed in the previous stage.
It consists of three functional units: a \textit{Translator} for formalization, a \textit{Planner} for goal and path construction, and a \textit{Solver} for deductive reasoning.

\noindent\textbf{Translator.} The translator converts the natural-language premises $P=\{p_1,p_2,\ldots,p_n\}$ into a formal premise set $\Gamma_P$ in FOL. Unlike the predicate identification step, which normalizes the target proposition $Q$ into the subject--property tuple $\mathcal{T}(Q)=\langle x,M,W\rangle$, the translator focuses on formalizing the contextual premises that support or refute the hexagonal positions. The translated premises provide the logical context under which $A,E,I,O,U,Y$ are evaluated. Each translated premise is checked by a grammar-based validator to ensure syntactic well-formedness. The resulting formal premise set $\Gamma_P$ is then passed to the planner and solver for downstream deduction.

\noindent\textbf{Planner.}
The planner constructs reasoning blueprints for the anchor position $\rho(Q)$ and its negation $\lnot\rho(Q)$.
Here, $\rho(Q)$ denotes the hexagonal position that matches the normalized meaning of the target proposition, while $\lnot\rho(Q)$ serves as its contradictory target.
The planner therefore builds two primary reasoning goals:
\begin{equation}
\mathcal{G}_{Q}
=
\{G_{\rho},G_{\lnot\rho}\},
\label{eq:anchor-goals}
\end{equation}
where $G_{\rho}$ aims to prove $\rho(Q)$ and $G_{\lnot\rho}$ aims to prove $\lnot\rho(Q)$ under the formal premise set $\Gamma_P$.
For each goal, the planner selects relevant premises from $\Gamma_P$ and identifies applicable reasoning rules, such as Modus Ponens, Modus Tollens, conjunction, contradiction, and existential instantiation.
The remaining hexagonal relations are reserved for the reflective verification stage, where they are used for consistency checking and conflict resolution.

\noindent\textbf{Solver.}
The solver executes the two reasoning goals generated by the planner under the formal premise set $\Gamma_P$, namely the anchor proposition $\rho(Q)$ and its negation $\lnot\rho(Q)$.
For each target $g\in\{\rho(Q),\lnot\rho(Q)\}$, it applies logical reasoning rules and returns a verdict $v(g)\in{\text{True},\text{False},\text{Unknown}}$.
Here, \textbf{True} means that $g$ can be derived from $\Gamma_P$, \textbf{False} means that $g$ is refuted, and \textbf{Unknown} means that neither side is conclusive.
The solver outputs
\begin{equation}
V_Q=
\{(\rho(Q),v(\rho(Q)),\tau_{\rho}),
(\lnot\rho(Q),v(\lnot\rho(Q)),\tau_{\lnot\rho})\},
\label{eq}
\end{equation}
where $\tau_{\rho}$ and $\tau_{\lnot\rho}$ are the corresponding reasoning traces.
These results are then passed to the reflective verification stage for consistency checking and conflict resolution.

\subsection{Reflective Verification Stage}

This stage determines the final judgment through a three-stage reflective process that ensures coherence among the answers of the semiotic hexagon's six propositions.

\noindent\textbf{Direct Resolution.}
When $\rho(Q)$ and $\lnot\rho(Q)$ produce complementary verdicts, such as \( \rho(Q) = \text{True} \) and \( \lnot\rho(Q) = \text{False} \), the stage directly adopts the answer of $\rho(Q)$ as final. This scenario reflects a decisive and non-contradictory judgment grounded in the strict contradiction relationship between the proposition and its contradictory. The decision rule for this resolution strategy is defined as follows:
\begin{equation}\small
\label{equation2}
\rho(Q) =
\left\{
\begin{array}{ll}
\text{True}, & \text{if } \rho(Q) = \text{True} \; \land \; \lnot\rho(Q) = \text{False} \\
\text{False}, & \text{if } \rho(Q) = \text{False} \; \land \; \lnot\rho(Q) = \text{True} \\
\text{Unknown}, & \text{if } \rho(Q), \lnot\rho(Q) = \text{Unknown}
\end{array}
\right.
\end{equation}
If the two verdicts are not complementary, the system enters reflective verification and uses the hexagonal constraint set $\mathcal{R}{\mathrm{hex}}$ to identify possible inconsistency, missing evidence, or semantic conflict.

\noindent\textbf{Quick Reflection.}
When either $\rho(Q)$ or its contradictory $\lnot\rho(Q)$ is labeled as \textit{Unknown}, the stage triggers quick reflection by forwarding the two verdicts and their reasoning traces into a large language model.
This design leverages the observation that verification is often easier than generation: given candidate reasoning traces, an LLM can more reliably inspect consistency, identify missing steps, and detect unsupported conclusions than generate a correct proof from scratch~\cite{NPvsP-2023cook,gen-verifi-gap-2025ICLR}.
The model then analyzes the internal consistency of the deduction process and returns a refined judgment based on four reflection types:

\vspace{4pt}

{\small
\noindent
\setlength{\fboxsep}{6pt}
\colorbox{gray!10}{
\begin{minipage}[t]{0.47\linewidth}
\noindent
\texttt{Case 1:} If \( \rho(Q) = \text{True/False} \; \land \; \lnot\rho(Q) = \text{Unknown} \)

\begin{itemize}
  \item \textbf{Type 1:} \(\rho(Q)\) correct 
        $\;\Rightarrow\;$ Return \(\rho(Q)=\rho(Q)\)
  \item \textbf{Type 2:} \(\rho(Q)\) incorrect 
        $\;\Rightarrow\;$ Return \(\rho(Q)=\text{Unknown}\)
\end{itemize}
\end{minipage}\hfill
\begin{minipage}[t]{0.5\linewidth}
\noindent
\texttt{Case 2:} If \( \rho(Q) = \text{Unknown} \; \land \; \lnot\rho(Q) = \text{True/False} \)

\begin{itemize}
  \item \textbf{Type 3:} \(\rho(Q)\) correct 
        $\;\Rightarrow\;$ Return \(\rho(Q)=\text{Unknown}\)
  \item \textbf{Type 4:} \(\lnot\rho(Q)\) correct 
        $\;\Rightarrow\;$ Return \(\rho(Q)=\lnot(\lnot\rho(Q))\)
\end{itemize}
\end{minipage}
}
}

\vspace{4pt}

\noindent\textbf{Deep Reflection.}
When both $\rho(Q)$ and its contradictory $\lnot\rho(Q)$ receive the same decisive verdict,
e.g., both \textbf{True} or both \textbf{False}, the system enters
\textit{Deep Reflection} mode.
This mode expands verification from the two conflicting verdicts to the semantic neighborhood of the anchor position under the hexagonal constraint set $\mathcal{R}_{\mathrm{hex}}$.
We divide the six positions into the contrary group
$\mathcal{C}_{\mathrm{ctr}}=\{A,E,Y\}$, where no two positions should be jointly true,
and the subcontrary group
$\mathcal{C}_{\mathrm{subctr}}=\{U,I,O\}$, where no two positions should be jointly false.
For subalternation-based checking, the adjacent superordinate positions of $U,I,O$ are defined as
$
\mathrm{Adj}(U)=\{A,E\},
\mathrm{Adj}(I)=\{A,Y\},
\mathrm{Adj}(O)=\{E,Y\}.
$
The final reflective rule is defined as follows:

\vspace{4pt}

{\small
\noindent
\setlength{\fboxsep}{6pt}
\colorbox{gray!10}{
\begin{minipage}{0.95\linewidth}

\texttt{Case 1:} $\rho(Q)\in\mathcal{C}_{\mathrm{ctr}}$
\begin{itemize}
  \item $\exists h\in\mathcal{C}_{\mathrm{ctr}}\setminus\{\rho(Q)\},
  \ v(h)=\text{True}$
  $\;\Rightarrow\;$ Return $\rho(Q)=\text{False}$. \qquad
  {\footnotesize\textit{(\% by Rule~(1) in Prop.~\ref{prop:hexagonal-completion}: contrariety.)}}

  \item Otherwise
  $\;\Rightarrow\;$ Return $\rho(Q)=\text{Unknown}$.
\end{itemize}

\medskip

\texttt{Case 2:} $\rho(Q)\in\mathcal{C}_{\mathrm{subctr}}$
\begin{itemize}
  \item $\exists h\in\mathcal{C}_{\mathrm{subctr}}\setminus\{\rho(Q)\},
  \ v(h)=\text{False}$
  $\;\Rightarrow\;$ Return $\rho(Q)=\text{True}$. \quad
  {\footnotesize\textit{(\% by Rule~(2) in Prop.~\ref{prop:hexagonal-completion}: subcontrariety.)}}

  \item $\exists h\in\mathrm{Adj}(\rho(Q)),
  \ v(h)=\text{True}$
  $\;\Rightarrow\;$ Return $\rho(Q)=\text{True}$. \qquad \quad
  {\footnotesize\textit{(\% by Rule~(4) in Prop.~\ref{prop:hexagonal-completion}: subalternation.)}}

  \item Otherwise
  $\;\Rightarrow\;$ Return $\rho(Q)=\text{Unknown}$.
\end{itemize}

\end{minipage}
}
}

\vspace{4pt}

Overall, the reflective verification stage provides a layered mechanism for converting local verifier outputs into a coherent final judgment. Direct resolution handles clear contradiction-based cases, quick reflection re-examines asymmetric uncertainty through reasoning-trace inspection, and deep reflection resolves decisive conflicts by consulting the broader hexagonal neighborhood. In this way, the system does not treat each proposition in isolation, but enforces consistency among contradiction, contrary, subcontrary, and subalternation relations before returning \textbf{True}, \textbf{False}, or \textbf{Unknown}.

\definecolor{CMTgreen}{HTML}{008000}

\algtext*{EndFor}

\newcommand{\cmt}[1]{\textcolor{CMTgreen}{\quad $\triangleright$ #1}}

\begin{algorithm}[H]
\caption{\textbf{Hexagon-Guided Reflective Reasoning}}
\label{alg:hexlogicagent}
\small
\begin{algorithmic}[1]
\Require Natural-language premises $P=\{p_1,\ldots,p_n\}$ and target proposition $Q$
\Ensure Final answer $a\in\{\text{True},\text{False},\text{Unknown}\}$

\State \textcolor{CMTgreen}{// Stage 1: Hexagonal semantic structuring}
\State $(\mathcal{T}(Q),\mathcal{H}(Q),\rho(Q),\mathcal{R}_{\mathrm{hex}})
\leftarrow \textsc{HexStruct}(Q)$
\cmt{predicate tuple, hexagon, anchor, constraints}

\Statex
\State \textcolor{CMTgreen}{// Stage 2: Logical reasoning}
\State $\Gamma_P\leftarrow \textsc{Translator}(P)$
\cmt{formalize premises}
\State $\mathcal{G}_Q\leftarrow
\textsc{Planner}(\Gamma_P,\rho(Q),\lnot\rho(Q))$
\cmt{construct anchor and contradictory goals}
\ForAll{$g\in\mathcal{G}_Q$}
    \State $(v(g),\tau_g)\leftarrow \textsc{Solver}(\Gamma_P,g)$
    \cmt{obtain verdict and trace}
\EndFor
\State $V_Q\leftarrow
\{(\rho(Q),v(\rho(Q)),\tau_{\rho}),
(\lnot\rho(Q),v(\lnot\rho(Q)),\tau_{\lnot\rho})\}$

\Statex
\State \textcolor{CMTgreen}{// Stage 3: Hexagon-guided reflective verification}
\State $\omega\leftarrow \textsc{SelectMode}(V_Q)$
\cmt{$\omega\in\{\textsc{Direct},\textsc{Quick},\textsc{Deep}\}$}
\State $a\leftarrow
\textsc{ReflectVerify}(V_Q,\mathcal{H}(Q),\mathcal{R}_{\mathrm{hex}},\omega)$
\cmt{apply the selected verification strategy}

\State \Return $a$
\end{algorithmic}
\end{algorithm}

\section{Experiments}
\label{sec:experiments_analysis}

In this section, we conduct extensive experiments to evaluate the effectiveness, robustness, and internal design of HexLogicAgent. Specifically, we aim to answer the following research questions (RQs):

\begin{itemize}
    \item \textbf{RQ1:} How does HexLogicAgent perform against representative linear, aggregate, symbolic, and semiotic reasoning methods across different benchmarks and backbone models? (Section~\ref{sec:main-results})
    
    \item \textbf{RQ2:} How does each component of HexLogicAgent contribute to the final performance, including hexagonal semantic structuring and reflective verification? (Section~\ref{sec:ablation})
    
    \item \textbf{RQ3:} How does HexLogicAgent behave under increasing logical complexity, and what insights emerge from error and efficiency analyses? (Section~\ref{sec:analysis})
\end{itemize}

In the remainder of this section, we first introduce the experimental settings, including benchmark datasets, backbone models, and baseline methods(Section~\ref{sec:experimental-settings}). Finally, we provide qualitative case studies to illustrate how hexagonal semantic structuring and reflective verification affect the reasoning process (Appendix).

\subsection{Experimental Settings}
\label{sec:experimental-settings}

\paragraph{Benchmarks.}
We evaluate HexLogicAgent on five logical reasoning benchmarks:
FOLIO~\cite{Benchmark-FOLIO2022folio}, ProntoQA~\cite{Benchmark-Pronto-2022language}, ProofWriter~\cite{Benchmark-ProofWriter-2020proofwriter}, ProverQA~\cite{Benchmark-ProverQA-ICLR-2025large}, and RepublicQA~\cite{zhang2026semanticawarelogicalreasoningsemiotic}.
These benchmarks cover three complementary scenarios:

\begin{enumerate}
    \item \textbf{Controlled deductive reasoning.}
    ProntoQA evaluates multi-hop deduction over synthetic proof chains, while ProofWriter tests rule-based reasoning under open-world assumptions.

    \item \textbf{Natural-language formal reasoning.}
    FOLIO provides expert-curated natural-language problems annotated with first-order logic, and ProverQA further combines natural-language statements, FOL translations, and theorem-prover-verified reasoning chains.

    \item \textbf{Abstract philosophical reasoning.}
    RepublicQA, introduced in the LogicAgent setting, evaluates reasoning over abstract philosophical propositions derived from classical argumentative contexts, testing semantic ambiguity, normative concepts, and counterargument structures.
\end{enumerate}

\paragraph{Backbone models.}
We conduct experiments on three backbone models to examine whether HexLogicAgent generalizes across different model families and deployment settings:

\begin{enumerate}
    \item \textbf{Closed-source model:} DeepSeek-V3.2, used as a representative closed-source backbone.

    \item \textbf{Open-source model:} Qwen3-30B-A3B, used as a representative open-source backbone.

    \item \textbf{LogicAgent-aligned backbone:} Qwen2.5-32B, used to maintain consistency with the original LogicAgent evaluation, while also forming an intra-family comparison with Qwen3-30B-A3B.
\end{enumerate}
This setup allows us to test both general cross-model effectiveness and direct improvement over the previous semiotic-square-based framework under the same backbone.

\definecolor{gainred}{HTML}{B22222}
\definecolor{gainbg}{HTML}{FDEDEC}
\definecolor{secondgreen}{HTML}{008000}

\newcommand{\second}[1]{\textcolor{secondgreen}{\underline{#1}}}

\newcommand{\yes}{\textcolor{secondgreen}{$\checkmark$}}
\newcommand{\no}{\textcolor{gainred}{$\times$}}

\newcommand{\bestmethod}[1]{\textbf{#1}}
\newcommand{\bestcell}[1]{\textbf{#1}}

\newcommand{\deltamethod}{\textcolor{gainred}{$\triangle$}}
\newcommand{\deltagain}[1]{\textcolor{gainred}{+#1}}

\newcommand{\modelicon}[1]{%
  \raisebox{-0.12em}{\includegraphics[height=1.15em]{#1}}%
}
\newcommand{\deepseek}{%
  \raisebox{-0.12em}{%
    \includegraphics[height=1.15em]{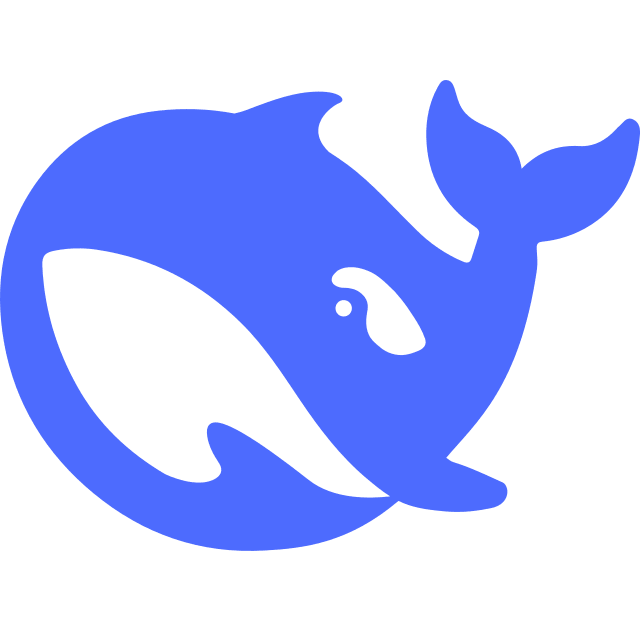}%
  }%
}

\newcommand{\qwen}{%
  \raisebox{-0.12em}{%
    \includegraphics[height=1.15em]{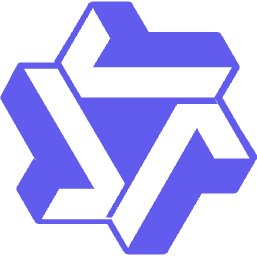}%
  }%
}

\begin{table*}[t]
\centering
\footnotesize
\caption{
\textbf{Main results across three backbone models.}
Each method is characterized by four reasoning features: linear reasoning (LR),
aggregative reasoning (AR), symbolic reasoning (SR), and semiotic reasoning (SeR).
AVG. is the macro-average accuracy over the five benchmarks.
\textbf{Bold} values mark the best results;
\textcolor{secondgreen}{\underline{green underlines}} mark the second-best results;
\colorbox{gainbg}{red-shaded rows} mark HexLogicAgent and
\textcolor{gainred}{$\triangle$} shows its absolute gain over the second-best result.
}

\begin{tabular*}{\linewidth}{@{\extracolsep{\fill}}lcccc l ccccc c@{}}
\toprule
\textbf{Model}
& \multicolumn{4}{c}{\textbf{Feature}}
& \textbf{Method}
& \multicolumn{5}{c}{\textbf{Benchmarks} $\uparrow$}
& \textbf{AVG.} $\uparrow$ \\
\cmidrule(lr){2-5}
\cmidrule(lr){7-11}
& \textbf{LR}
& \textbf{AR}
& \textbf{SR}
& \textbf{SeR}
&
& \textbf{FOLIO}
& \textbf{ProntoQA}
& \makecell{\textbf{Proof}\\\textbf{Writer}}
& \textbf{ProverQA}
& \textbf{RepublicQA}
& \\
\midrule

\multirow{11}{*}{%
  \makecell[c]{%
    \deepseek\\
    DeepSeek\\
    V3.2%
  }%
}
& \yes & \no & \no & \no
& Direct
& 64.22 & 60.60 & 47.83 & 58.00 & 74.00 & 60.93 \\
& \yes & \no & \no & \no
& CoT
& 79.90 & 100.00 & 87.17 & 60.20 & 66.50 & 78.75 \\

& \yes & \yes & \no & \no
& ToT
& 83.82 & 98.40 & 91.67 & 64.60 & 64.00 & 80.50 \\
& \yes & \yes & \no & \no
& CR
& 83.35 & 99.00 & 90.50 & 63.80 & 64.00 & 80.13 \\
& \yes & \yes & \no & \no
& DetermLR
& 82.84 & 100.00 & 92.00 & 66.60 & 66.00 & 81.49 \\

& \yes & \no & \yes & \no
& LogicLM
& 69.12 & 98.80 & 77.00 & 74.20 & 37.50 & 71.32 \\
& \yes & \no & \yes & \no
& SymbCoT
& 85.29 & 97.71 & 91.33 & 76.00 & 36.00 & 77.27 \\
& \yes & \yes & \yes & \no
& Aristotle
& 79.41 & 100.00 & 90.50 & 74.50 & 77.50 & 84.38 \\
\cmidrule(lr){6-12}

& \yes & \yes & \yes & \yes
& LogicAgent
& \second{85.78} & \second{100.00} & \second{92.16} & \second{76.40} & \second{79.50} & \second{86.77} \\

\rowcolor{gainbg}
\cellcolor{white}
& \cellcolor{white}\yes
& \cellcolor{white}\yes
& \cellcolor{white}\yes
& \cellcolor{white}\yes
& \bestmethod{HexLogicAgent}
& \bestcell{87.74}
& \bestcell{100.00}
& \bestcell{95.50}
& \bestcell{82.40}
& \bestcell{81.50}
& \bestcell{89.43} \\

\rowcolor{gainbg}
\cellcolor{white}
& \cellcolor{white}
& \cellcolor{white}
& \cellcolor{white}
& \cellcolor{white}
& \deltamethod
& \deltagain{1.96}
& \deltagain{0.00}
& \deltagain{3.34}
& \deltagain{6.00}
& \deltagain{2.00}
& \deltagain{2.66} \\

\midrule

\multirow{11}{*}{%
  \makecell[c]{%
    \qwen\\
    Qwen2.5\\32B
  }%
}
& \yes & \no & \no & \no
& Direct
& 60.29 & 82.00 & 59.17 & 39.60 & 68.50 & 61.91 \\
& \yes & \no & \no & \no
& CoT
& 68.42 & 92.40 & 63.17 & 47.20 & 72.00 & 68.64 \\

& \yes & \yes & \no & \no
& ToT
& 72.54 & 82.50 & 64.40 & 53.40 & 56.00 & 65.77 \\
& \yes & \yes & \no & \no
& CR
& 71.57 & 80.20 & 58.33 & 51.80 & 57.00 & 63.78 \\
& \yes & \yes & \no & \no
& DetermLR
& 71.56 & 96.50 & 65.33 & 63.90 & 71.50 & 73.76 \\

& \yes & \no & \yes & \no
& LogicLM
& 71.93 & 91.89 & 63.82 & 62.40 & 70.00 & 72.01 \\
& \yes & \no & \yes & \no
& SymbCoT
& 70.59 & 95.20 & 64.67 & 57.20 & 76.00 & 72.73 \\
& \yes & \yes & \yes & \no
& Aristotle
& 68.68 & 94.80 & 63.23 & 56.20 & 74.50 & 71.48 \\
\cmidrule(lr){6-12}

& \yes & \yes & \yes & \yes
& LogicAgent
& \second{79.90} & \second{97.80} & \second{71.95} & \second{68.60} & \second{82.50} & \second{80.15} \\

\rowcolor{gainbg}
\cellcolor{white}
& \cellcolor{white}\yes
& \cellcolor{white}\yes
& \cellcolor{white}\yes
& \cellcolor{white}\yes
& \bestmethod{HexLogicAgent}
& \bestcell{81.37}
& \bestcell{99.20}
& \bestcell{75.00}
& \bestcell{72.40}
& \bestcell{86.50}
& \bestcell{82.89} \\

\rowcolor{gainbg}
\cellcolor{white}
& \cellcolor{white}
& \cellcolor{white}
& \cellcolor{white}
& \cellcolor{white}
& \deltamethod
& \deltagain{1.47}
& \deltagain{1.40}
& \deltagain{3.05}
& \deltagain{3.80}
& \deltagain{4.00}
& \deltagain{2.74} \\

\midrule

\multirow{11}{*}{%
  \makecell[c]{%
    \qwen\\
    Qwen3\\
    30B-A3B%
  }%
}
& \yes & \no & \no & \no
& Direct
& 59.80 & 82.00 & 54.83 & 49.00 & 71.00 & 63.33 \\
& \yes & \no & \no & \no
& CoT
& 76.96 & 100.00 & 78.50 & 56.60 & 65.50 & 75.51 \\

& \yes & \yes & \no & \no
& ToT
& 64.71 & 100.00 & 82.00 & 70.50 & 75.00 & 78.44 \\
& \yes & \yes & \no & \no
& CR
& 67.15 & 100.00 & 84.00 & 71.80 & 74.50 & 79.49 \\
& \yes & \yes & \no & \no
& DetermLR
& 79.90 & 100.00 & \second{84.17} & 68.20 & 72.50 & 80.95 \\

& \yes & \no & \yes & \no
& LogicLM
& 62.25 & 48.60 & 32.50 & 56.80 & 39.00 & 47.83 \\
& \yes & \no & \yes & \no
& SymbCoT
& \second{82.27} & 100.00 & 81.00 & 73.75 & 35.00 & 74.40 \\
& \yes & \yes & \yes & \no
& Aristotle
& 77.45 & 100.00 & 79.50 & 72.80 & 74.50 & 80.85 \\
\cmidrule(lr){6-12}

& \yes & \yes & \yes & \yes
& LogicAgent
& 81.86 & \second{100.00} & \second{84.17} & \second{75.20} & \second{76.50} & \second{83.55} \\

\rowcolor{gainbg}
\cellcolor{white}
& \cellcolor{white}\yes
& \cellcolor{white}\yes
& \cellcolor{white}\yes
& \cellcolor{white}\yes
& \bestmethod{HexLogicAgent}
& \bestcell{85.29}
& \bestcell{100.00}
& \bestcell{89.00}
& \bestcell{77.00}
& \bestcell{78.50}
& \bestcell{85.96} \\

\rowcolor{gainbg}
\cellcolor{white}
& \cellcolor{white}
& \cellcolor{white}
& \cellcolor{white}
& \cellcolor{white}
& \deltamethod
& \deltagain{3.02}
& \deltagain{0.00}
& \deltagain{4.83}
& \deltagain{1.80}
& \deltagain{2.00}
& \deltagain{2.41} \\

\bottomrule
\end{tabular*}
\label{tab:main_results_all_models}
\end{table*}

\paragraph{Baselines.}
We compare HexLogicAgent with nine representative baselines, characterized by four reasoning features: linear reasoning (LR), aggregative reasoning (AR), symbolic reasoning (SR), and semiotic reasoning (SeR).

\begin{enumerate}
    \item \textbf{Linear reasoning:}
    Direct prompting and Chain-of-Thought (CoT)~\cite{COT-2022chain}, which follow a single reasoning path.

    \item \textbf{Aggregative Reasoning:}
    Tree-of-Thought (ToT)~\cite{TOT-2023tree}, Cumulative Reasoning (CR)~\cite{Cumulative-reasoning-2023-TsinghuIIIS}, and DetermLR~\cite{Determlr-ACL-2024-RenDaGaoling}, which explore or refine multiple reasoning paths.

    \item \textbf{Symbolic reasoning:}
    Logic-LM~\cite{LogicLM-2023}, SymbCoT~\cite{Symbolic-COT-2024faithful}, and Aristotle~\cite{Baseline-Aristotle2024xu}, which introduce symbolic representations, logical decomposition, or formal constraints.

    \item \textbf{Semiotic reasoning:}
    LogicAgent~\cite{zhang2026semanticawarelogicalreasoningsemiotic}, the most directly related baseline, which organizes proposition meanings through the semiotic square.
\end{enumerate}


\begin{figure}[width=\textwidth,pos=h!]
    \centering
    \includegraphics[
        width=0.65\textwidth,
        height=0.4\textheight,
        keepaspectratio
    ]{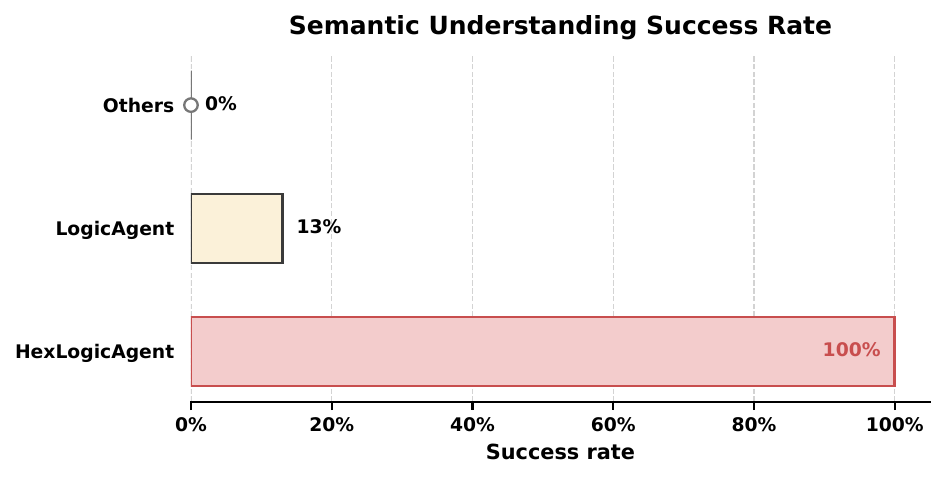}
    \caption{
    \textbf{Semantic-understanding success rate.}
    The figure compares the semantic-understanding success rates of the other reasoning methods, LogicAgent, and HexLogicAgent, which achieve $0\%$, $13\%$, and $100\%$, respectively.
    }
    \label{fig:semantic_understanding}
\end{figure}

\subsection{Main Results}
\label{sec:main-results}

To answer RQ1, Table~\ref{tab:main_results_all_models} reports the main results across five benchmarks and three backbone models.

\textbf{(1) HexLogicAgent consistently achieves the best overall performance across all backbone models.}
This shows that the proposed hexagon-guided reasoning framework is effective beyond a single model family or deployment setting.
Specifically, HexLogicAgent obtains the highest macro-average accuracy on DeepSeek-V3.2, Qwen2.5-32B, and Qwen3-30B-A3B.
Compared with the strongest baseline in each setting, it improves the average accuracy by $+2.66$, $+2.74$, and $+2.41$ points, respectively.
These results indicate that the logical hexagon provides a stable semantic structure for improving LLM reasoning across different backbones.

\textbf{(2) HexLogicAgent consistently improves over LogicAgent by enhancing semantic understanding.}
This comparison is particularly important because LogicAgent already organizes proposition meanings using the semiotic square. Across all frontier LLMs, extending the semantic structure to the logical hexagon consistently improves reasoning accuracy, increasing the average accuracy from $86.77$ to $89.43$ on DeepSeek-V3.2, from $80.15$ to $82.89$ on Qwen2.5-32B, and from $83.55$ to $85.96$ on Qwen3-30B-A3B. Figure~\ref{fig:semantic_understanding} further explains this improvement. While conventional reasoning methods fail to correctly capture the semantic relations in these cases ($0\%$ success rate), and LogicAgent succeeds on only $13\%$ of them, HexLogicAgent achieves a $100\%$ success rate. These results indicate that introducing existential and mixed-state positions enables substantially more accurate semantic interpretation, which translates into more reliable downstream logical reasoning.

\textbf{(3) HexLogicAgent shows strong advantages on both structured reasoning and semantically abstract tasks.}
On benchmarks requiring formal or multi-step logical verification, the gains are particularly clear.
For example, on DeepSeek-V3.2, HexLogicAgent improves over LogicAgent by $+6.00$ points on ProverQA and $+3.34$ points on ProofWriter.
On Qwen3-30B-A3B, it improves by $+4.83$ points on ProofWriter and $+3.02$ points on FOLIO.
At the same time, HexLogicAgent remains competitive on semantically abstract reasoning tasks such as RepublicQA, which involve philosophical propositions and counterargument structures.
Across all three backbones, it achieves the best performance on RepublicQA, improving over the second-best result by $+2.00$, $+4.00$, and $+2.00$ points, respectively.
These results indicate that the logical hexagon enables both more complete verification in structured reasoning and better handling of abstract proposition meanings.

\textbf{Overall, the results answer RQ1 affirmatively.}
HexLogicAgent consistently outperforms all representative baselines across benchmarks and backbone models. Moreover, its consistent gains over LogicAgent demonstrate the advantage of the logical hexagon over the semiotic square, validating the effectiveness of hexagon-guided semantic structuring and reflective verification.

\begin{figure}[width=\textwidth,pos=h!]
    \centering
    \includegraphics[
        width=0.95\textwidth,
        height=0.5\textheight,
        keepaspectratio
    ]{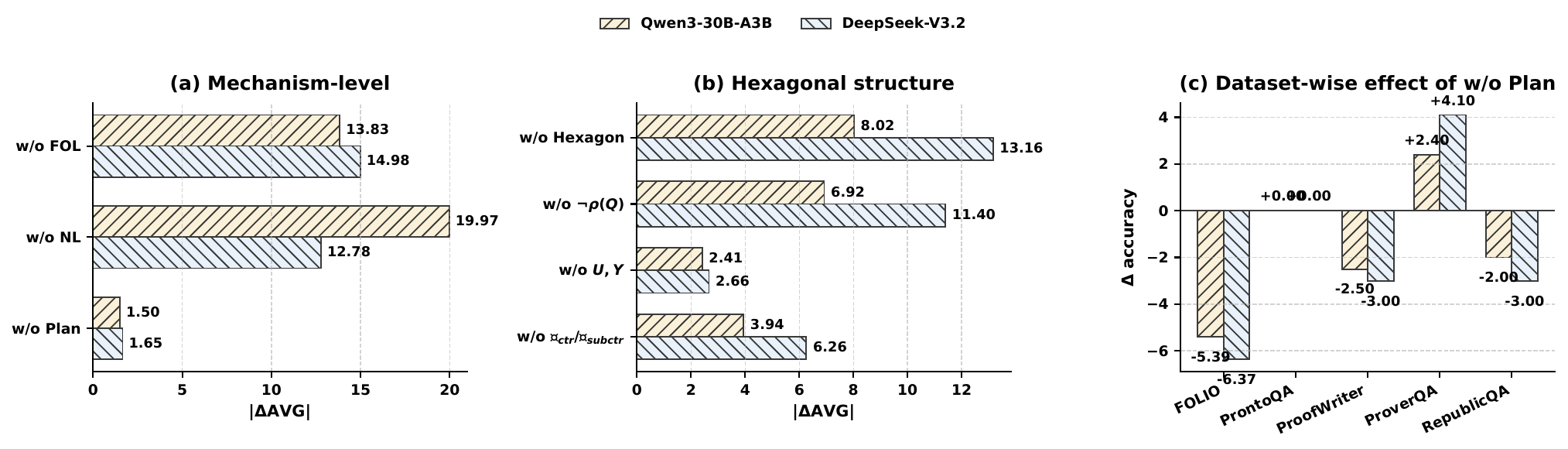}
    \caption{
    \textbf{Ablation drops and dataset-wise effect of planning removal.}
    Panels (a) and (b) show absolute average accuracy drops under mechanism-level and hexagonal-structure ablations, where larger $|\Delta\mathrm{AVG}|$ indicates a more important component.
    Panel (c) shows the dataset-wise accuracy change after removing the planning module, highlighting that w/o Plan improves ProverQA but hurts most other benchmarks.
    }
    \label{fig:ablation_three_panel}
\end{figure}

\definecolor{fullgreen}{HTML}{008000}
\definecolor{droppred}{HTML}{B22222}

\newcommand{\fullres}[1]{\textbf{\textcolor{fullgreen}{#1}}}
\newcommand{\dropres}[1]{\textcolor{droppred}{#1}}
\newcommand{\critdrop}[1]{\textbf{\textcolor{droppred}{#1}}}

\begin{table*}[t]
\centering
\small
\setlength{\tabcolsep}{4.8pt}
\renewcommand{\arraystretch}{1.10}
\begin{tabular}{llccccccc}
\toprule
\textbf{Model}
& \textbf{Setting}
& \textbf{FOLIO}
& \textbf{ProntoQA}
& \makecell{\textbf{Proof}\\\textbf{Writer}}
& \textbf{ProverQA}
& \textbf{RepublicQA}
& \textbf{AVG}
& \textbf{$\Delta$AVG} \\
\midrule

\multirow{4}{*}{%
  \makecell[c]{%
    \qwen\\
    Qwen3\\
    30B-A3B%
  }%
}
& \textbf{HexLogicAgent}
& \fullres{85.29}
& \fullres{100.00}
& \fullres{89.00}
& 77.00
& \fullres{78.50}
& \fullres{85.96}
& --- \\
& w/o FOL
& 72.55 & 84.20 & 82.67 & 49.20 & 72.00
& 72.12 & \dropres{-13.83} \\
& w/o NL
& 68.63 & 75.60 & 82.33 & 63.40 & 40.00
& 65.99 & \critdrop{-19.97} \\
& w/o Plan
& 79.90 & 100.00 & 86.50 & \fullres{79.40} & 76.50
& 84.46 & \dropres{-1.50} \\

\midrule

\multirow{4}{*}{%
  \makecell[c]{%
    \deepseek\\
    DeepSeek\\
    V3.2%
  }%
}
& \textbf{HexLogicAgent}
& \fullres{87.74}
& \fullres{100.00}
& \fullres{95.50}
& 82.40
& \fullres{81.50}
& \fullres{89.43}
& --- \\
& w/o FOL
& 69.12 & 90.40 & 85.33 & 58.40 & 69.00
& 74.45 & \critdrop{-14.98} \\
& w/o NL
& 73.53 & 96.20 & 93.33 & 80.20 & 40.00
& 76.65 & \dropres{-12.78} \\
& w/o Plan
& 81.37 & 100.00 & 92.50 & \fullres{86.50} & 78.50
& 87.77 & \dropres{-1.65} \\

\bottomrule
\end{tabular}
\caption{
\textbf{Mechanism-level ablation results.}
FOL, NL, and Plan denote formal-logic reasoning, natural-language semantic reasoning, and planning, respectively.
AVG denotes the macro-average accuracy over the five benchmarks, and $\Delta$AVG reports the absolute change from the full HexLogicAgent.
\textbf{\textcolor{droppred}{Bold red}} marks the largest average drop within each model block.
}
\label{tab:mechanism_ablation}
\end{table*}

\begin{table*}[t]
\centering
\small
\setlength{\tabcolsep}{4.8pt}
\renewcommand{\arraystretch}{1.10}
\begin{tabular}{llccccccc}
\toprule
\textbf{Model}
& \textbf{Setting}
& \textbf{FOLIO}
& \textbf{ProntoQA}
& \makecell{\textbf{Proof}\\\textbf{Writer}}
& \textbf{ProverQA}
& \textbf{RepublicQA}
& \textbf{AVG}
& \textbf{$\Delta$AVG} \\
\midrule

\multirow{5}{*}{%
  \makecell[c]{%
    \qwen\\
    Qwen3\\
    30B-A3B%
  }%
}
& \textbf{HexLogicAgent}
& \fullres{85.29}
& \fullres{100.00}
& \fullres{89.00}
& \fullres{77.00}
& \fullres{78.50}
& \fullres{85.96}
& --- \\
& w/o Hexagon
& 66.67 & 99.80 & 84.33 & 68.40 & 70.50
& 77.94 & \critdrop{-8.02} \\
& w/o $\neg\rho(Q)$
& 72.22 & 100.00 & 83.11 & 69.87 & 70.00
& 79.04 & \dropres{-6.92} \\
& w/o $U, Y$
& 81.86 & 100.00 & 84.17 & 75.20 & 76.50
& 83.55 & \dropres{-2.41} \\
& w/o $\mathcal{C}_{\mathrm{ctr}}/\mathcal{C}_{\mathrm{subctr}}$
& 78.92 & 99.80 & 83.17 & 75.20 & 73.00
& 82.02 & \dropres{-3.94}\\

\midrule

\multirow{5}{*}{%
  \makecell[c]{%
    \deepseek\\
    DeepSeek\\
    V3.2%
  }%
}
& \textbf{HexLogicAgent}
& \fullres{87.74}
& \fullres{100.00}
& \fullres{95.50}
& \fullres{82.40}
& \fullres{81.50}
& \fullres{89.43}
& --- \\
& w/o Hexagon
& 65.19 & 98.40 & 90.17 & 63.07 & 64.50
& 76.27 & \critdrop{-13.16} 
 \\
& w/o $\neg\rho(Q)$
& 62.25 & 98.60 & 90.67 & 67.60 & 71.00
& 78.02 & \dropres{-11.40} \\
& w/o $U, Y$
& 85.78 & 100.00 & 92.16 & 76.40 & 79.50
& 86.77 & \dropres{-2.66} \\
& w/o $\mathcal{C}_{\mathrm{ctr}}/\mathcal{C}_{\mathrm{subctr}}$
& 70.10 & 99.80 & 92.83 & 81.60 & 71.50
& 83.17 & \dropres{-6.26}\\

\bottomrule
\end{tabular}
\caption{
\textbf{Hexagonal-structure ablation results.}
$\neg\rho(Q)$ denotes the contradictory anchor target; $U/Y$ denotes the two composite hexagonal positions;
$\mathcal{C}_{\mathrm{ctr}}=\{A,E,Y\}$ and $\mathcal{C}_{\mathrm{subctr}}=\{U,I,O\}$ denote the contrary and subcontrary groups.
AVG denotes the macro-average accuracy over the five benchmarks, and $\Delta$AVG reports the absolute change from the full HexLogicAgent.
\textbf{\textcolor{droppred}{Bold red}} marks the largest average drop within each model block.
}
\label{tab:hexagonal_structure_ablation}
\end{table*}

\subsection{Ablation Study}
\label{sec:ablation}

To answer RQ2, we conduct ablation studies to examine the contribution of each component in HexLogicAgent, as shown in Tables~\ref{tab:mechanism_ablation}--\ref{tab:hexagonal_structure_ablation} and Figure~\ref{fig:ablation_three_panel}.

\textbf{(1) FOL and NL provide complementary grounding.}
Removing either FOL or NL leads to large drops on both backbones, showing that HexLogicAgent depends on both symbolic constraints and natural-language semantic associations.
On Qwen3-30B-A3B, removing NL causes the largest degradation, with AVG dropping from $85.96$ to $65.99$ ($-19.97$), while removing FOL reduces AVG to $72.12$ ($-13.83$).
On DeepSeek-V3.2, removing FOL has the strongest effect, decreasing AVG from $89.43$ to $74.45$ ($-14.98$), while removing NL drops AVG to $76.65$ ($-12.78$).
These results indicate that FOL improves inferential precision through formal constraints, whereas NL preserves contextual and semantic cues that are not fully captured by symbolic forms alone.

\textbf{(2) The logical hexagon is not merely an additional structure, but a useful relational constraint system.}
Removing the full hexagonal structure causes substantial performance degradation on both backbones: $85.96 \rightarrow 77.94$ ($-8.02$) on Qwen3-30B-A3B and $89.43 \rightarrow 76.27$ ($-13.16$) on DeepSeek-V3.2.
Among the fine-grained components, removing the contradictory anchor target $\neg\rho(Q)$ also causes large drops, from $85.96$ to $79.04$ ($-6.92$) and from $89.43$ to $78.02$ ($-11.40$), respectively.
The contrary/subcontrary group constraints further contribute to performance, especially on DeepSeek-V3.2, where removing $\mathcal{C}_{\mathrm{ctr}}/\mathcal{C}_{\mathrm{subctr}}$ reduces AVG from $89.43$ to $83.17$ ($-6.26$).
By contrast, removing the composite positions $U,Y$ leads to smaller drops ($-2.41$ and $-2.66$).
This suggests that the main benefit of the logical hexagon comes from enforcing structured semantic relations, especially contradiction, contrariety, and subcontrariety.

\textbf{(3) Planning is useful overall, but can be harmful for already long reasoning chains.}
Removing Plan slightly improves ProverQA on both backbones: $77.00 \rightarrow 79.40$ ($+2.40$) on Qwen3-30B-A3B and $82.40 \rightarrow 86.50$ ($+4.10$) on DeepSeek-V3.2.
However, this gain is not consistent across datasets.
Excluding ProverQA, removing Plan reduces the average accuracy from $88.20$ to $85.73$ ($-2.47$) on Qwen3-30B-A3B and from $91.19$ to $88.09$ ($-3.09$) on DeepSeek-V3.2.
This indicates that planning generally helps organize heterogeneous reasoning tasks, but when the original reasoning chain is already long, an extra planning stage may further extend the trajectory, increasing context burden, memory pressure, and error accumulation.
Therefore, planning should be adaptive rather than fixed for long-chain logical reasoning.

\textbf{Overall, the ablation results answer RQ2 affirmatively.}
Both the mechanism-level components and the hexagonal semantic structure contribute to the final performance.
FOL and NL offer complementary grounding signals, while the logical hexagon improves reasoning by enforcing structured relations among contradiction, contrariety, subcontrariety, and composite positions.

\begin{figure}[width=\textwidth,pos=h!]
    \centering
    \includegraphics[
        width=0.95\textwidth,
        height=0.4\textheight,
        keepaspectratio
    ]{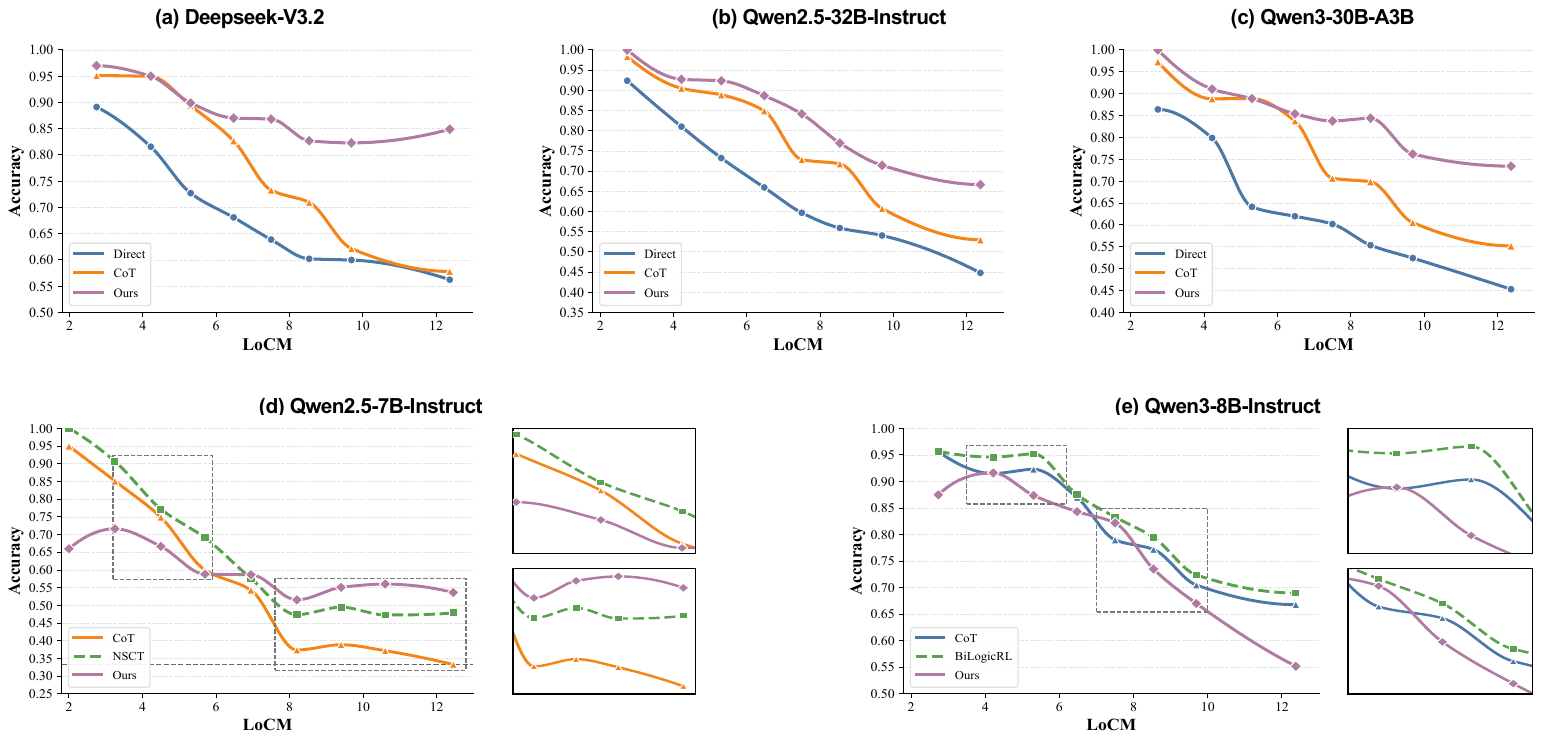}
    \caption{
    \textbf{Phase transition curves across LoCM.}
    Panels (a)–(c) compare Direct prompting, CoT, and LogicAgent on DeepSeek-V3.2, Qwen2.5-32B-Instruct, and Qwen3-30B-A3B, respectively. Panels (d)–(e) compare LogicAgent with representative training-based methods on Qwen2.5-7B-Instruct and Qwen3-8B-Instruct. Each point represents the accuracy within a LoCM bin, and the insets enlarge representative local regions.
    }
    \label{fig:phase_transition}
\end{figure}

\subsection{Analysis}
\label{sec:analysis}

To answer RQ3, we conduct three complementary analyses. \textit{(1) Phase-transition analysis} examines model performance under increasing logical complexity (LOCM). \textit{(2) Error analysis} characterizes common reasoning failures and investigates how the logical hexagon mitigates them. \textit{(3) Efficiency analysis} evaluates the computational overhead of HexLogicAgent.

\subsubsection{Phase-Transition Analysis}
Inspired by the notion of logical phase transitions~\cite{zhang2026logicalphasetransitionsunderstanding}, we analyze how model performance changes with increasing LoCM, as shown in Figure~\ref{fig:phase_transition}.

\textbf{(1) HexLogicAgent delays logical phase transitions on frontier LLMs.}
As logical complexity increases, all methods eventually experience performance degradation. However, Direct prompting and CoT undergo abrupt accuracy drops once LoCM exceeds the medium-complexity region, whereas HexLogicAgent maintains substantially higher accuracy in the high-complexity regime. Notably, neither advanced prompting strategies nor representative training-based methods fundamentally delay the transition point. In contrast, HexLogicAgent consistently postpones the onset of logical phase transitions, suggesting that \textit{explicitly guiding reasoning with logical hexagon structures is an effective mechanism for improving robustness under increasing logical complexity}.

\textbf{(2) The benefits of HexLogicAgent become more pronounced as logical complexity increases.}
The performance gap between HexLogicAgent and conventional prompting methods is relatively modest on low-complexity problems but widens progressively in high-complexity regions. This suggests that explicitly modeling richer logical relations is particularly beneficial for solving structurally complex reasoning tasks, where conventional prompting begins to fail.

\textbf{(3) The gains diminish on smaller models.}
On Qwen2.5-7B and Qwen3-8B, HexLogicAgent no longer consistently outperforms training-based baselines. This is likely because the framework relies on strong instruction following, reliable FOL translation, multi-stage reasoning, and long-context processing, all of which remain challenging for smaller models.

\begin{figure}[width=\textwidth,pos=h!]
    \centering
    \includegraphics[
        width=0.95\textwidth,
        height=0.4\textheight,
        keepaspectratio
    ]{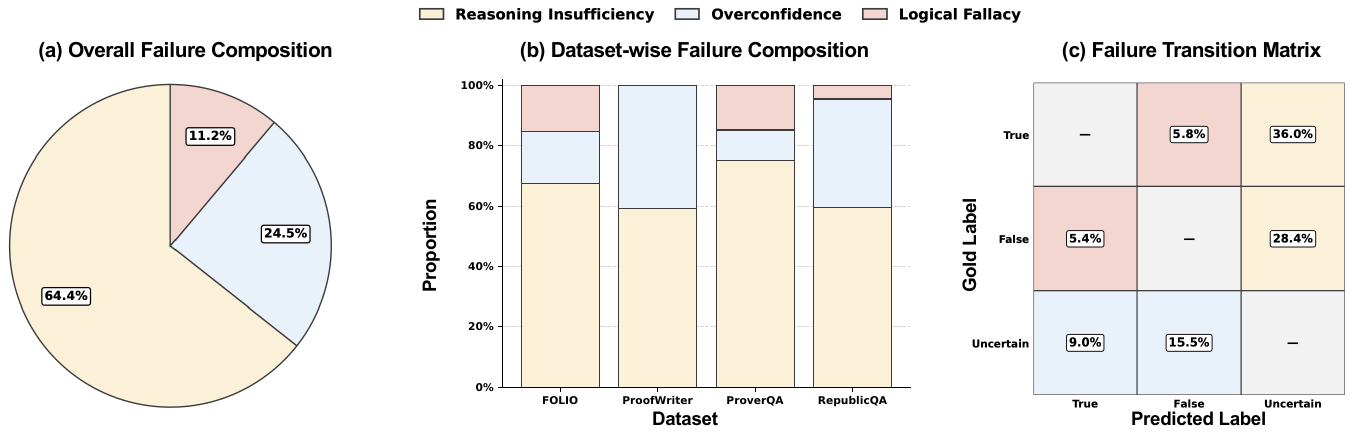}
    \caption{
    \textbf{Failure-mode analysis of HexLogicAgent.}
    Panel (a) shows the overall composition of erroneous predictions, categorized as reasoning insufficiency, overconfidence, and logical fallacy.
    Panel (b) reports the normalized failure composition for each benchmark, revealing that reasoning insufficiency dominates across datasets, while overconfidence is particularly prominent on ProofWriter and RepublicQA.
    Panel (c) presents the directional transitions from gold labels to predicted labels, with percentages computed over all erroneous predictions.
    }
    \label{fig:failure_mode_analysis}
\end{figure}

\subsubsection{Error Analysis}

We further analyze the erroneous predictions produced by HexLogicAgent with DeepSeek-V3.2, as summarized in Figure~\ref{fig:failure_mode_analysis}.
The analysis covers all incorrect predictions across the five benchmarks and groups them into three categories: reasoning insufficiency, overconfidence, and logical fallacy.

\textbf{(1) Reasoning insufficiency is the dominant failure mode.}
As shown in Figure~\ref{fig:failure_mode_analysis}(a), reasoning insufficiency accounts for $64.4\%$ of all errors.
These errors occur when the gold label is either \textit{True} or \textit{False}, but the model predicts \textit{Unknown}.
Overconfidence accounts for another $24.5\%$, whereas direct logical fallacies between \textit{True} and \textit{False} constitute only $11.2\%$.
Thus, most errors are associated with the treatment of uncertainty rather than direct reversals of logical polarity.

\textbf{(2) The distribution of failure modes varies across benchmarks.}
Figure~\ref{fig:failure_mode_analysis}(b) reports the normalized error composition on the four three-way classification benchmarks.
Reasoning insufficiency accounts for $67.4\%$ of the FOLIO errors, $59.3\%$ of the ProofWriter errors, $75.0\%$ of the ProverQA errors, and $59.5\%$ of the RepublicQA errors.
Overconfidence is particularly visible on ProofWriter and RepublicQA, where it represents $40.7\%$ and $36.0\%$ of the errors, respectively.
ProntoQA is omitted from this panel because it adopts a binary-label setting.

\textbf{(3) Errors are asymmetric with respect to uncertainty.}
The transition matrix in Figure~\ref{fig:failure_mode_analysis}(c) shows that the largest individual transition is $\textit{True}\rightarrow\textit{Unknown}$, accounting for $36.0\%$ of all errors, followed by $\textit{False}\rightarrow\textit{Unknown}$ at $28.4\%$.
In the opposite direction, $\textit{Unknown}\rightarrow\textit{False}$ and $\textit{Unknown}\rightarrow\textit{True}$ account for $15.5\%$ and $9.0\%$, respectively.
Direct polarity reversals are comparatively balanced and infrequent: $\textit{True}\rightarrow\textit{False}$ accounts for $5.8\%$, while $\textit{False}\rightarrow\textit{True}$ accounts for $5.4\%$.
These results identify uncertainty calibration as the main remaining source of error in HexLogicAgent.

\begin{figure}[width=\textwidth,pos=h!]
    \centering
    \includegraphics[
        width=0.95\textwidth,
        height=0.4\textheight,
        keepaspectratio
    ]{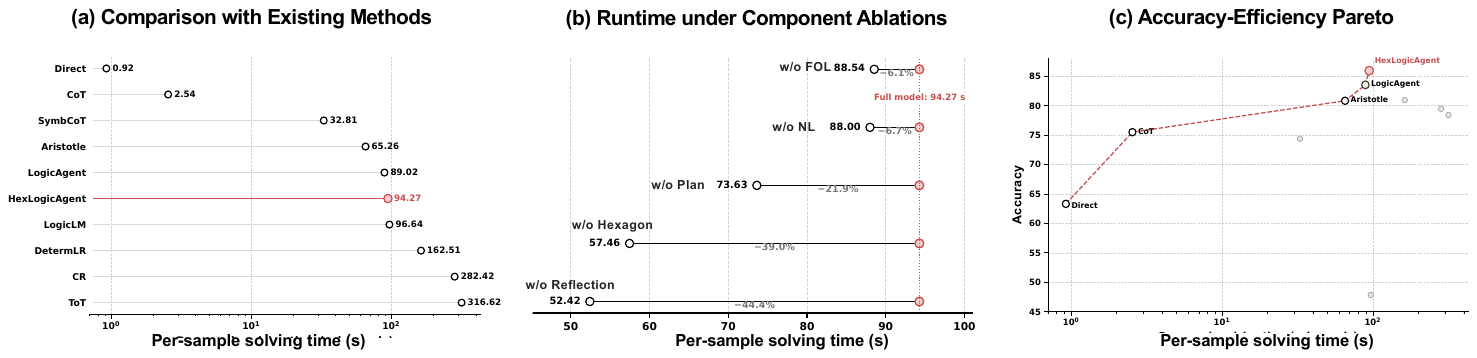}
    \caption{
    \textbf{Time-efficiency analysis of HexLogicAgent.}
    Panel (a) compares the per-sample solving time of HexLogicAgent and existing reasoning methods on a logarithmic scale.
    Panel (b) reports the per-sample solving time under different component ablations, with percentages denoting runtime changes relative to the full model.
    Panel (c) presents the accuracy--efficiency Pareto diagram on Qwen3-30B-A3B, where the dashed red line indicates the Pareto frontier.
    }
    \label{fig:time_efficiency_analysis}
\end{figure}

\subsubsection{Efficiency Analysis}
\label{sec:time-efficiency}

We evaluate the computational efficiency of HexLogicAgent through comparisons with existing reasoning methods, component-level runtime ablations, and accuracy--efficiency analysis, as shown in Figure~\ref{fig:time_efficiency_analysis}.

\textbf{(1) HexLogicAgent has a moderate runtime among structured reasoning methods.}
Figure~\ref{fig:time_efficiency_analysis}(a) reports the average per-sample solving time on a logarithmic scale.
HexLogicAgent requires $94.27$ seconds per sample, compared with $89.02$ seconds for LogicAgent and $96.64$ seconds for LogicLM.
It is slower than lightweight prompting methods such as Direct ($0.92$ seconds), CoT ($2.54$ seconds), SymbCoT ($32.81$ seconds), and Aristotle ($65.26$ seconds), but faster than DetermLR ($162.51$ seconds), CR ($282.42$ seconds), and ToT ($316.62$ seconds).

\textbf{(2) Reflective verification and hexagonal structuring account for most of the additional runtime.}
As shown in Figure~\ref{fig:time_efficiency_analysis}(b), removing reflective verification reduces the per-sample solving time from $94.27$ to $52.42$ seconds, corresponding to a $44.4\%$ reduction.
Removing the hexagonal structuring stage reduces the runtime to $57.46$ seconds ($39.0\%$), while removing planning reduces it to $73.63$ seconds ($21.9\%$).
By comparison, removing natural-language reasoning or formal-logic reasoning results in smaller reductions, producing runtimes of $88.00$ and $88.54$ seconds, respectively.
This decomposition shows that most of the computational overhead comes from constructing and verifying multiple semantic positions rather than from either reasoning representation alone.

\textbf{(3) HexLogicAgent remains on the accuracy--efficiency Pareto frontier.}
Figure~\ref{fig:time_efficiency_analysis}(c) compares average accuracy and per-sample solving time on Qwen3-30B-A3B.
The Pareto frontier consists of Direct, CoT, Aristotle, LogicAgent, and HexLogicAgent.
Compared with LogicAgent, HexLogicAgent improves the average accuracy from $83.55\%$ to $85.96\%$, while increasing the per-sample runtime from $89.02$ to $94.27$ seconds.
This corresponds to a $2.41$-point accuracy gain with an additional $5.25$ seconds per sample, indicating that the hexagonal extension provides an improved accuracy--efficiency trade-off over the semiotic-square baseline.

\textbf{Overall, the results answer RQ3 affirmatively.}
HexLogicAgent is more robust under increasing logical complexity, with larger gains on challenging reasoning tasks. Remaining errors are dominated by uncertainty calibration rather than logical polarity reversal, while the additional runtime mainly comes from hexagon construction and reflective verification. Despite this overhead, HexLogicAgent remains on the Pareto frontier, achieving a favorable accuracy--efficiency trade-off.

\section{Related Work}
\label{sec:related_work}

\textbf{LLM-Based Logical Reasoning.}
Existing LLM~\citep{li2024coupled,LoRA-Mixer-2026-ICLR,wu2026hotcomment} reasoning methods can be broadly categorized into three paradigms: Linear Reasoning (LR), Aggregative Reasoning (AR), and Symbolic Reasoning (SR). LR methods improve reasoning by generating a single chain of intermediate natural-language steps, including Chain-of-Thought (CoT)~\citep{COT-2022chain}, self-consistency~\citep{COT-SC-2022self}, and subsequent prompting strategies~\citep{Prompt-Design-2025prompt,GSM-apple-2024gsm,Efficient-reasoning-survey-2025NUS,yang2025magic,feng2025can,shen2026-zju}. AR methods further enhance robustness by exploring or aggregating multiple reasoning trajectories, such as Tree-of-Thought~\citep{TOT-2023tree}, CLOVER~\citep{CLOVER-ICLR-2024divide}, DetermLR~\citep{Determlr-ACL-2024-RenDaGaoling}, cumulative reasoning~\citep{Cumulative-reasoning-2023-TsinghuIIIS}, and other planning-based frameworks~\citep{Planning-in-logical-reasoning2023explicit,multi-step-2024exploring}. SR methods integrate LLMs with formal representations, symbolic deduction, or theorem proving, including NL-to-FOL translation~\citep{NL2FOL-2023harnessing,hahn2022formal}, LogicLM~\citep{LogicLM-2023}, SymbCoT~\citep{Symbolic-COT-2024faithful}, and Aristotle~\citep{Baseline-Aristotle2024xu}. Despite their differences, LR, AR, and SR all primarily improve logical deduction after proposition meanings have been interpreted, providing limited support for explicitly organizing semantic relations before reasoning.

\textbf{Semantic Organization for Logical Reasoning.}
Natural-language reasoning requires not only valid logical deduction but also appropriate semantic interpretation of propositions before deduction begins~\citep{Ambiguity-1993ambiguity,Logic-semantic-2007-Rutgers}. Formal studies in semiotics and logic have long investigated semantic organization through structured relations among propositions. Greimas' semiotic square~\citep{Greimas1982semiotics,Greimas-meaning1987,Greimas-semiotics1988maupassant} characterizes semantic opposition using contrary, contradictory, and subaltern relations, while the logical hexagon~\citep{hexagon-2012,logical-hexagon-2012,square2hexagon-2012} further introduces existential and mixed-state relations to provide a more complete semantic structure. However, these semantic theories have rarely been incorporated into modern LLM reasoning systems. Our previous LogicAgent framework~\citep{zhang2026semanticawarelogicalreasoningsemiotic} introduced semiotic-square-guided reasoning, whereas this work extends it to the logical hexagon, enabling richer semantic organization and more reliable logical reasoning through hexagon-guided structuring and reflective verification.

\textbf{Logical Reasoning Benchmarks.}
Logical reasoning benchmarks have evolved from synthetic rule-based datasets to increasingly realistic natural-language reasoning tasks. Early benchmarks, such as PrOntoQA~\citep{Benchmark-Pronto-2022language} and ProofWriter~\citep{Benchmark-ProofWriter-2020proofwriter}, emphasize deductive reasoning in controlled environments with formally defined rules. Subsequent benchmarks, including FOLIO~\citep{Benchmark-FOLIO2022folio} and ProverQA~\citep{Benchmark-ProverQA-ICLR-2025large}, introduce richer natural-language scenarios and first-order logical representations. More recently, RepublicQA~\citep{zhang2026semanticawarelogicalreasoningsemiotic} further improves realism by incorporating semantically diverse, context-dependent, and abstract propositions, providing a more challenging evaluation of LLM reasoning. These benchmarks have substantially advanced the evaluation of logical reasoning. Our work is complementary to this line of research: rather than constructing new benchmarks, we investigate how richer semantic organization before deduction can improve reasoning performance across existing benchmarks.

\section{Conclusion}
\label{sec:conclusion}

In this work, we investigated semantic-aware logical reasoning for large language models. We presented HexLogicAgent, a reasoning framework that extends the semiotic square to the logical hexagon by introducing existential and mixed-state semantics, enabling richer semantic organization before logical deduction. HexLogicAgent combines hexagon-guided semantic structuring with reflective verification to bridge semantic interpretation and logical reasoning within a unified framework. Extensive experiments on five logical reasoning benchmarks and multiple LLM backbones demonstrate consistent improvements over representative linear, aggregative, symbolic, and semiotic reasoning methods. Furthermore, our analyses show that HexLogicAgent improves semantic understanding, delays logical phase transitions under increasing logical complexity, and achieves a favorable accuracy--efficiency trade-off.

Looking forward, an important direction is to integrate semantic organization with the emerging generation of thinking LLMs. Although recent reasoning models incorporate long-chain deliberation and self-reflection, their exploratory reasoning often incurs substantial token consumption and reasoning latency. We believe that explicitly organizing proposition meanings before deduction can provide more structured reasoning guidance, reducing unnecessary exploration while improving reasoning reliability and efficiency. More broadly, we hope this work encourages further integration of formal semantic theories with LLM reasoning, leading to more reliable, interpretable, and efficient reasoning systems.

\printcredits


\section*{Acknowledgments}
This work is supported by the National Natural Science Foundation of China (Numbers 62272184 and 62402189), 
the China Postdoctoral Science Foundation (Numbers 2024M751012, 2025T180429, and GZC20230894),  
the Postdoctor Project of Hubei Province (Number 2024HBBHCXB014),
the Natural Science Foundation of Hubei Province No.JCZRMS202600758,
and Sponsored by CIPS-SMP-Zhipu Large Model Fund (CIPS-SMP20250306),
The computation is completed in the HPC Platform of Huazhong University of Science and Technology.

\clearpage

\appendix
\section*{Appendix: Case Study}
\label{app:case-study}

We use Socrates' counterexample to Cephalus' definition of justice in Plato's
\textit{Republic} to illustrate how HexLogicAgent distinguishes a local
counterexample from a universal negation.

\begin{figure}[width=\textwidth,pos=h!]
\centering

\begin{casebox}
\small
\setlength{\parindent}{0pt}
\setlength{\parskip}{1.5pt}
\renewcommand{\arraystretch}{0.96}

\casetitle{Context and Question}

Returning borrowed money to its rightful owner under normal circumstances is
an act of repaying a debt. Returning borrowed money restores rightful property
and does not endanger innocent people. Any action satisfying these conditions
is just. Returning a borrowed weapon is also a debt repayment. If its owner has
become mentally unstable, returning the weapon enables harm to innocent people.
Any action enabling harm to innocent people is not just.

\begin{tabularx}{\linewidth}{
    >{\bfseries}p{0.13\linewidth}
    X
}
Question &
Is the statement ``All acts of repaying debts are just'' correct? \\
\end{tabularx}

\casetitle{Stage 1: Hexagonal Semantic Structuring}

HexLogicAgent first normalizes the target proposition into the subject--property
tuple
\(\mathcal{T}(Q)=\langle x,\;DebtRepayment(x),\;Just(x)\rangle\)
and identifies its semantic anchor.

\begin{tabularx}{\linewidth}{
    >{\centering\arraybackslash\bfseries}p{0.08\linewidth}
    X
}
\toprule
Position & Logical form \\
\midrule
\(A\) &
\(\forall x\,(DebtRepayment(x)\rightarrow Just(x))\) \\

\(E\) &
\(\forall x\,(DebtRepayment(x)\rightarrow \neg Just(x))\) \\

\(I\) &
\(\exists x\,(DebtRepayment(x)\land Just(x))\) \\

\(O\) &
\(\exists x\,(DebtRepayment(x)\land \neg Just(x))\) \\

\(U\) &
\(U=A\lor E\) \\

\(Y\) &
\(Y=I\land O\) \\
\bottomrule
\end{tabularx}

The target statement is anchored at \(\rho(Q)=A\), whose contradictory
position is \(\neg\rho(Q)=O\).

\casetitle{Stage 2: Logical Reasoning}

\textbf{Premises.}
Let \(r_m\) denote the ordinary money-return action and \(r_w\) denote the
weapon-return action.

\begin{enumerate}[
    leftmargin=1.6em,
    itemsep=0pt,
    topsep=1pt,
    parsep=0pt,
    partopsep=0pt
]
    \item \(DebtRepayment(r_m)\).
    \item \(RestoresRightfulProperty(r_m)\) and
          \(\neg EndangersInnocentPeople(r_m)\).
    \item \(\forall x((RestoresRightfulProperty(x)\land
          \neg EndangersInnocentPeople(x))\rightarrow Just(x))\).
    \item \(DebtRepayment(r_w)\) and \(MentallyUnstableOwner(r_w)\).
    \item \(\forall x(MentallyUnstableOwner(x)\rightarrow
          EnablesHarmToInnocentPeople(x))\).
    \item \(\forall x(EnablesHarmToInnocentPeople(x)\rightarrow
          \neg Just(x))\).
\end{enumerate}

\begin{tabularx}{\linewidth}{
    >{\bfseries}p{0.18\linewidth}
    X
}
Ordinary repayment &
The premises derive \(Just(r_m)\). Since \(DebtRepayment(r_m)\) also holds,
the existential affirmative position is supported:
\(I=\mathrm{True}\). \\[1pt]

Weapon repayment &
The owner's mental instability entails
\(EnablesHarmToInnocentPeople(r_w)\), which yields
\(\neg Just(r_w)\). Together with \(DebtRepayment(r_w)\), the existential
negative position is supported:
\(O=\mathrm{True}\). \\
\end{tabularx}

\casetitle{Stage 3: Hexagon-Guided Reflective Verification}

Since both existential propositions are supported,
\(I=\mathrm{True}\) and \(O=\mathrm{True}\),
the verifier derives the mixed state
\(Y=I\land O=\mathrm{True}\),
indicating that debt repayments form a mixed rather than uniform class.

The target proposition is anchored at the universal affirmative position
\(A\). According to the contrary constraint
\(\mathcal{C}_{\mathrm{ctr}}=\{A,E,Y\}\),
the mixed state rejects the universal affirmative, yielding
\(A=\mathrm{False}\). Meanwhile, because \(I=\mathrm{True}\),
the universal negative proposition cannot hold, and thus
\(E=\mathrm{False}\).

Therefore, the weapon-return example establishes only the local
counterexample \(O=\neg A\), rather than the stronger global negative
position \(E\). Accordingly, the final verdict is

\begin{center}
\fbox{\textbf{False}}
\end{center}

\end{casebox}

\caption{Case study illustrating how HexLogicAgent distinguishes a local counterexample from a universal negation. }
\label{fig:compact_case}

\end{figure}

\bibliographystyle{unsrtnat}

\bibliography{main}



\end{document}